\newtheorem{theorem}{Theorem}
\author{
Nikita Muravev$^{1,2}$
\And
Aleksandr Petiushko$^{2,3}$\footnote{Work done at Huawei Moscow Research Center, Moscow, Russia}\\
\affiliations
$^1$Huawei Moscow Research Center\\
$^2$Lomonosov Moscow State University\\
$^3$Nuro, Inc.\\
\emails
nikita.muravev@huawei.com,
apetiushko@gmail.com
}
\title{Certified Robustness via Randomized
Smoothing over\\ Multiplicative Parameters of Input Transformations}
\newtheorem{lemma}[theorem]{Lemma}
\newtheorem{definition}[theorem]{Definition} 
\begin{document}

\maketitle

\begin{abstract}
    Currently the most popular method of providing robustness certificates is randomized smoothing where an input is smoothed via some probability distribution.
	We propose a novel approach to randomized smoothing over multiplicative parameters. Using this method we construct certifiably robust classifiers with respect to a gamma correction perturbation and compare the result with classifiers obtained via other smoothing distributions (Gaussian, Laplace, uniform). The experiments show that asymmetrical Rayleigh distribution allows to obtain better certificates for some values of perturbation parameters. To the best of our knowledge it is the first work concerning certified robustness against the multiplicative gamma correction transformation and the first to study effects of asymmetrical distributions in randomized smoothing.
\end{abstract}

\section{Introduction}
%10.1007/978-3-642-40994-3_25
It is well known \cite{szegedy2014intriguing} that modern classification models are vulnerable to adversarial attacks. There were many attempts \cite{43405,35184,Liu_2018_ECCV} to construct empirically robust classifiers, but all of them are proven to be vulnerable to some new more powerful attacks \cite{10.1145/3128572.3140444,pmlr-v80-athalye18a}
%pmlr-v80-uesato18a
. As a result, a lot of certification methods have been introduced \cite{wong,li2020provable,alfarra2021deformrs}
%NEURIPS2019_3a24b25a
that provide provably correct robustness certificates. Among this variety of certification methods the randomized smoothing remains one of the most effective and feasible approaches which is easy to implement and to parallel. Randomized smoothing \cite{cohen,li,lecuyer} can be applied to data and classifiers of any nature since it does not use any information about the internal structure of the classifier. It is also easily scalable to large networks unlike other methods. All these benefits lead to the fact that the randomized smoothing approach is currently one of the most popular and powerful solutions to robustness certification tasks.

Consequently, there are a lot of works devoted to the randomized smoothing. While most of them concern a setting of robustness within $l_p$- or $l_{\infty}$-balls, some generalize the approach to cover parameterized transformations \cite{fischer,li2020provable}. These parameters usually add up to each other when composing transformations, which allows to smooth classifiers over them. Yet there are some other types of perturbations such that their parameters do not add up but multiply to each other (e.g. a volume change of audio signals or a gamma correction of images). Though it is possible to reduce this case to the additive one by a logarithm (like it is done in \cite{fischer}), we find that alternative distributions (like Rayleigh) may result in better certificates. Thus we propose a new approach of smoothing directly over multiplicative parameters. We prove the theoretical guarantees for classifiers smoothed with Rayleigh distribution and compare them with those obtained via other smoothing distributions for the gamma correction perturbation. Our experiments show that the Rayleigh smoothing cannot be consistently outperformed by the others and thus may be more suitable for certain values of the perturbation parameter.

The main contributions of this work are:
\begin{itemize}
	\item Direct generalization of the randomized smoothing method to the case of multiplicative parameters;
	\item Practical comparison of the proposed method and other smoothing techniques for the gamma correction perturbation, which shows superiority of our method for some factors;
	\item Construction of the first certifiably robust classifiers for the gamma correction perturbation.
\end{itemize}

\section{Background}

The randomized smoothing is a method of replacement of an original classifier $f$ with its ``smoothed'' version ${g(x) = \arg\max\limits_{c}{\mathbb{P}_{\beta}(f\circ\psi_{\beta}(x)=c)}}$ that returns the class the base classifier $f$ is most likely to return when an instance is perturbed by some randomly distributed composable transformation $\psi$. A right choice of the distribution $\beta$ allows to predict a robustness certificate for a given input $x$ for this smoothed classifier $g$ using its confidence in its own answer. Here a robustness certificate is a range of transformations such that they do not change the predicted class $g(x)$ when applied to the given input $x$. One expects the new (smoothed) classifier to be reasonably good if the base one is sufficiently robust.

This technique has only two major drawbacks. The first one is that the transformation we want out classifier to be robust against has to be composable. It is a strong constraint since even theoretically composable transformations lose this property in practice due to rounding and interpolation errors. The other one is that generally we are unable to calculate the smoothed classifier $g$ directly and instead have to use approximations. Since we cannot calculate the actual prediction of $g$, we use the Monte-Carlo approximation algorithm with $n$ samples to obtain the result with an arbitrary high level of confidence. The introduced parameter $n$ controls the trade-off between the approximation accuracy and the evaluation time on inference. Though it is possible to smooth a default undefended base classifier $f$, experiments show that the best results can be achieved by smoothing of already (empirically) robust base classifier $f$. Thus during training of the base classifier $f$ we use the same type of augmentation we expect the smoothed classifier $g$ to be robust against. For a more detailed description of the randomized smoothing method see \cite{cohen,li,lecuyer}.

\section{Generalization of smoothing for multiplicative parameters}

Let us extend the notion of composable transformations in case of multiplicative parameters.

\begin{definition}
	A parameterized map ${\psi_{\delta} : X \rightarrow X,} \; {\delta\in \mathcal{B}\subset\mathbb{R}^n}$ is called multiplicatively composable if 
	\begin{equation}(\psi_{\delta}\circ\psi_{\theta}) (x) = \psi_{(\delta\cdot \theta)} (x), \; \forall x\in X,\; \forall \delta,\theta\in \mathcal{B},
	\end{equation}
	where $\delta\cdot \theta\in\mathcal{B}$ means the element-wise multiplication of vectors.
\end{definition}

Usually multiplicative parameters are positive, thus one needs probability distributions with positive supports for randomized smoothing. Yet unlike the additive case an identity transformation corresponds to the parameter value equal to 1 rather than 0. So a conventional exponential distribution \cite{li2020provable} is unsuitable here (we want to concentrate probability mass at 1). Thus we propose a Rayleigh distribution for these needs.

\begin{definition}
	A random variable $\zeta$ has a Rayleigh distribution with the scale parameter $\sigma > 0$ $(\zeta\sim Rayleigh(\sigma))$ if its probability density function (PDF) has a form \begin{equation}
	p_{\zeta}(z) = \sigma^{-2}z e^{-z^2/(2\sigma^2)}, z \geq 0.
	\end{equation}
\end{definition}

For classifiers smoothed with the Rayleigh distribution the following robustness guarantee can be obtained:

\begin{theorem}\label{main}
	Let ${x\in {\mathbb{R}^m}}$, ${f:\mathbb{R}^m \rightarrow Y}$ be a classifier, ${\psi_{\beta}:\mathbb{R}^m \rightarrow \mathbb{R}^m}$ be a multiplicatively composable  transformation for ${\beta\sim Rayleigh(\sigma)}$ and ${g(x) = \arg\max\limits_{c}{\mathbb{P}_{\beta}(f\circ\psi_{\beta}(x)=c)}}$. Denote
	$${p_A = \mathbb{P}_{\beta}(f\circ \psi_{\beta}(x)=c_A)},\;\;
	{p_B = \max\limits_{c_B \not= c_A}{ \mathbb{P}_{\beta}(f\circ\psi_{\beta}(x)}=c_B)}.
	$$
	If
	$$p_A \geq \underline{p_A} > \overline{p_B}\geq p_B,$$
	then $g\circ\psi_{\gamma}(x)=c_A$ for all $\gamma$ satisfying $\gamma_1 < \gamma < \gamma_2$, where $\gamma_1, \gamma_2$ are the only solutions of the following equations
	\begin{equation}
	F(\gamma_1^{-1}F^{-1}(\overline{p_B})) + F(\gamma_1^{-1}F^{-1}(1-\underline{p_A}))=1,
	\end{equation}
	\begin{equation}
	F(\gamma_2^{-1}F^{-1}(\underline{p_A})) + F(\gamma_2^{-1}F^{-1}(1-\overline{p_B}))=1,
	\end{equation}
	and $F(z) = 1 - e^{-z^2/(2\sigma^2)}$ is the CDF of $\beta$.\end{theorem}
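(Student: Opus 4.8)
The plan is to follow the Neyman--Pearson argument used for Gaussian smoothing in~\cite{cohen}, transported to the multiplicative Rayleigh setting. The first step is a reduction to a one-parameter statement. By multiplicative composability $\psi_{\beta}(\psi_{\gamma}(x)) = \psi_{\gamma\beta}(x)$, and since $\beta\sim Rayleigh(\sigma)$ implies $\gamma\beta\sim Rayleigh(\gamma\sigma)$ for any fixed $\gamma>0$ (the Rayleigh family is a scale family, as is visible from $F$), we obtain
$$g(\psi_{\gamma}(x)) = \arg\max_{c}\; \mathbb{P}_{\beta'\sim Rayleigh(\gamma\sigma)}\bigl(f(\psi_{\beta'}(x))=c\bigr).$$
Writing $A=\{z\geq 0 : f(\psi_{z}(x))=c_A\}$ and $B_c=\{z\geq 0 : f(\psi_{z}(x))=c\}$ for $c\neq c_A$, it therefore suffices to show $\mathbb{P}_{Rayleigh(\gamma\sigma)}(A) > \mathbb{P}_{Rayleigh(\gamma\sigma)}(B_c)$ for every such $c$, knowing only $\mathbb{P}_{Rayleigh(\sigma)}(A)\geq \underline{p_A}$ and $\mathbb{P}_{Rayleigh(\sigma)}(B_c)\leq \overline{p_B}$. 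The single scaling fact used is the CDF identity $F_{\gamma\sigma}(z)=F(\gamma^{-1}z)$, immediate from the closed form of $F$.

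The second step bounds the two probabilities separately via the Neyman--Pearson lemma. The likelihood ratio
$$\frac{p_{\gamma\sigma}(z)}{p_{\sigma}(z)} = \gamma^{-2}\exp\Bigl(\tfrac{z^{2}}{2\sigma^{2}}(1-\gamma^{-2})\Bigr)$$
is strictly monotone in $z$: increasing when $\gamma>1$, decreasing when $\gamma<1$. Hence, among sets of prescribed $Rayleigh(\sigma)$-mass, the one minimizing (respectively maximizing) the $Rayleigh(\gamma\sigma)$-mass is a half-line $[0,a]$ or $[a,\infty)$ chosen according to the sign of $\gamma-1$. Carrying out these extremal computations and using $F_{\gamma\sigma}(z)=F(\gamma^{-1}z)$ gives, for $\gamma>1$,
$$\mathbb{P}_{Rayleigh(\gamma\sigma)}(A)\geq F\bigl(\gamma^{-1}F^{-1}(\underline{p_A})\bigr),\qquad \mathbb{P}_{Rayleigh(\gamma\sigma)}(B_c)\leq 1-F\bigl(\gamma^{-1}F^{-1}(1-\overline{p_B})\bigr),$$
so the target inequality holds whenever $F(\gamma^{-1}F^{-1}(\underline{p_A})) + F(\gamma^{-1}F^{-1}(1-\overline{p_B})) > 1$; the symmetric computation for $\gamma<1$ shows it holds whenever $F(\gamma^{-1}F^{-1}(\overline{p_B})) + F(\gamma^{-1}F^{-1}(1-\underline{p_A})) < 1$. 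Because the two extremal problems for $A$ and $B_c$ are solved independently, no single worst-case classifier need realize both bounds, and the relative position of the two half-lines is irrelevant.

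The third step studies the two threshold conditions as functions of $\gamma$. For fixed $t>0$, $\gamma\mapsto F(\gamma^{-1}t)$ is continuous and strictly decreasing on $(0,\infty)$, with limit $1$ as $\gamma\to 0^{+}$ and limit $0$ as $\gamma\to\infty$. Thus $h(\gamma):=F(\gamma^{-1}F^{-1}(\underline{p_A}))+F(\gamma^{-1}F^{-1}(1-\overline{p_B}))-1$ is continuous, strictly decreasing, with $h(1)=\underline{p_A}-\overline{p_B}>0$ and $h(\gamma)\to -1$ as $\gamma\to\infty$; hence it has a unique zero $\gamma_2\in(1,\infty)$ — the unique solution of the equation defining $\gamma_2$ — and $h(\gamma)>0$ precisely for $\gamma<\gamma_2$. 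Symmetrically, $k(\gamma):=F(\gamma^{-1}F^{-1}(\overline{p_B}))+F(\gamma^{-1}F^{-1}(1-\underline{p_A}))-1$ is strictly decreasing with $k(1)=\overline{p_B}-\underline{p_A}<0$ and $k(\gamma)\to 1$ as $\gamma\to 0^{+}$, hence has a unique zero $\gamma_1\in(0,1)$ — the unique solution of the equation defining $\gamma_1$ — and $k(\gamma)<0$ precisely for $\gamma>\gamma_1$. Combining the three regimes $\gamma\in(\gamma_1,1)$, $\gamma=1$ (where $\psi_1$ is the identity and $g(x)=c_A$ by the strict hypothesis $\underline{p_A}>\overline{p_B}$), and $\gamma\in(1,\gamma_2)$ gives $g(\psi_\gamma(x))=c_A$ for all $\gamma_1<\gamma<\gamma_2$.

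The main obstacle is the Neyman--Pearson step: one must correctly orient the monotone likelihood ratio in the two cases $\gamma>1$ and $\gamma<1$, and translate the resulting optimal half-lines through $F_{\gamma\sigma}(z)=F(\gamma^{-1}z)$ into exactly the expressions appearing in the statement. The remaining ingredients — the reduction via composability, the scaling of the Rayleigh law, and the monotonicity analysis of $h$ and $k$ — are routine.
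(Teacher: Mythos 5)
Your proposal is correct and follows essentially the same route as the paper's own proof: the paper's Lemma establishing $p_{\beta\gamma}(z)\leq t\,p_{\beta}(z)$ on $A$ and $>t\,p_{\beta}(z)$ off $A$ is exactly your monotone-likelihood-ratio/Neyman--Pearson step (note $\tfrac{\gamma^2-1}{\gamma^2}=1-\gamma^{-2}$, so the two ratio expressions coincide), and the half-line worst-case sets, the resulting threshold conditions, and the monotonicity argument pinning down $\gamma_1,\gamma_2$ all match. Your writeup is if anything slightly more explicit than the paper's on the existence and uniqueness of the zeros of $h$ and $k$.
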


A proof is similar to the one provided in \cite{fischer}, though it is also possible to obtain the same result using the constrained adversarial certification framework \cite{zhang} or TSS framework \cite{li2020provable}. All variants are given in Appendix \ref{A},\ref{B}. Notice that the certificate bounds $\gamma_1, \gamma_2$ can be easily found numerically (e.g. by a simple bisection method). But it is also possible to use a trivial estimation $\overline{p_B}=1-\underline{p_A}$ for the probability of the top two class. In that case the equations can be solved analytically:
\begin{equation}
\gamma_1 = \frac{F^{-1}(1-\underline{p_A})}{F^{-1}(\frac{1}{2})}=\sqrt{\frac{\log\underline{p_A}}{\log\frac{1}{2}}},
\end{equation}
\begin{equation}
\gamma_2 = \frac{F^{-1}(\underline{p_A})}{F^{-1}(\frac{1}{2})}=\sqrt{\frac{\log(1-\underline{p_A})}{\log\frac{1}{2}}}.
\end{equation}

The only parameter to be chosen for the Rayleigh distribution is the scale $\sigma$. It seems reasonable to choose it in such a way that either the median or the mean of the random value equals one (value multiplied by 1 stays the same). We try both variants and find that the median equal to 1 results in better certificates (See Appendix \ref{C} for comparison details). Thus hereafter the scale is equal to $\frac{1}{\sqrt{2\ln{(2)}}}$.

The above theorem can be generalized in case of transformations with multiple multiplicative parameters.

\begin{theorem}
	Let $x\in {\mathbb{R}^m}$, $f:\mathbb{R}^m \rightarrow Y$ be a classifier, ${\psi_{\beta}:\mathbb{R}^m \rightarrow \mathbb{R}^m}$, ${\beta=(\beta_1, ..., \beta_n)^{T}}$ be a multiplicatively composable  transformation for independent and identically distributed random variables $\beta_i\sim Rayleigh(\sigma)$ and ${g(x) = \arg\max\limits_{c}{\mathbb{P}_{\beta}(f\circ\psi_{\beta}(x)=c)}}$.
	Denote $${p_A = \mathbb{P}_{\beta}(f\circ \psi_{\beta}(x)=c_A)},\;\; {p_B = \max\limits_{c_B \not= c_A}{ \mathbb{P}_{\beta}(f\circ\psi_{\beta}(x)}=c_B)}.$$ If
	$$p_A \geq \underline{p_A} > \overline{p_B}\geq p_B,$$
	then $g\circ\psi_{\gamma}(x)=c_A$ for all $\gamma\in\Omega$, where $\Omega$ is a region defined by the following inequality
	
	\begin{multline}
	%\begin{split}
	\mathbb{P}((\gamma_1^{2}-1)\beta_1^2 +...+ (\gamma_n^{2}-1)\beta_n^2 \leq r) \\>\mathbb{P}((\gamma_1^{2}-1)\beta_1^2 +...+ (\gamma_n^{2}-1)\beta_n^2 \geq \theta),
	%\end{split}
	\end{multline}
	where $r$ and $\theta$ are the only solutions of the following equations
	\begin{equation}
	\mathbb{P}((1-\gamma_1^{-2})\beta_1^2 +...+ (1-\gamma_n^{-2})\beta_n^2 \leq r)=\underline{p_A},
	\end{equation}
	\begin{equation}
	\mathbb{P}((1-\gamma_1^{-2})\beta_1^2 +...+ (1-\gamma_n^{-2})\beta_n^2 \geq \theta)=\overline{p_B}.
	\end{equation}
\end{theorem}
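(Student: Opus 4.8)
The plan is to reproduce the Neyman--Pearson argument behind Theorem~\ref{main} (and behind the $l_p$-ball certificates of \cite{cohen,fischer}), the only new ingredient being that a one-dimensional likelihood ratio is replaced by a product of $n$ one-dimensional ratios whose joint level sets turn out to be sublevel sets of a single scalar quadratic form.

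First I would use multiplicative composability. Fixing $\gamma=(\gamma_1,\dots,\gamma_n)$, the identity $(\psi_\beta\circ\psi_\gamma)(x)=\psi_{\beta\cdot\gamma}(x)$ gives $\mathbb{P}_\beta(f\circ\psi_\beta(\psi_\gamma(x))=c)=\mathbb{P}(f\circ\psi_{\beta\cdot\gamma}(x)=c)$, so it suffices to compare class probabilities under the law $Q$ of $\beta\cdot\gamma$ with what is known under the law $P$ of $\beta$ (a product of $n$ independent $Rayleigh(\sigma)$). Since $\beta_i\gamma_i\sim Rayleigh(\sigma\gamma_i)$, $Q$ is again a product measure and the Radon--Nikodym derivative factorises:
\[
\frac{dQ}{dP}(z)=\prod_{i=1}^n\frac{(\sigma\gamma_i)^{-2}z_i\,e^{-z_i^2/(2\sigma^2\gamma_i^2)}}{\sigma^{-2}z_i\,e^{-z_i^2/(2\sigma^2)}}=\Big(\prod_{i=1}^n\gamma_i^{-2}\Big)\exp\!\Big(\tfrac{1}{2\sigma^2}\textstyle\sum_{i=1}^n(1-\gamma_i^{-2})z_i^2\Big).
\]
The key structural observation is that $dQ/dP$ is a strictly increasing function of the (possibly indefinite) quadratic form $S(z)=\sum_i(1-\gamma_i^{-2})z_i^2$; consequently every set $\{dQ/dP\le t\}$ equals a set $\{S\le r\}$ and every set $\{dQ/dP\ge t\}$ equals a set $\{S\ge\theta\}$.

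Next I would carry out the Neyman--Pearson step. Write $A=\{z:f\circ\psi_z(x)=c_A\}$ and, for an arbitrary $c_B\ne c_A$, $B=\{z:f\circ\psi_z(x)=c_B\}$; by hypothesis $P(A)\ge\underline{p_A}$ and $P(B)\le p_B\le\overline{p_B}$. Among all measurable sets of $P$-measure $\underline{p_A}$, the one of smallest $Q$-measure is a sublevel set $\{dQ/dP\le t\}=\{S\le r\}$ with $r$ determined by $P(S(\beta)\le r)=\underline{p_A}$ — exactly the first defining equation in the statement — and enlarging $A$ beyond measure $\underline{p_A}$ only raises $Q(A)$, so this is the worst case. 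Dually, among all sets of $P$-measure $\overline{p_B}$ the one of largest $Q$-measure is $\{S\ge\theta\}$ with $P(S(\beta)\ge\theta)=\overline{p_B}$, the second defining equation. Transporting these extremal values through $Q$ and using $S(\beta\cdot\gamma)=\sum_i(1-\gamma_i^{-2})(\beta_i\gamma_i)^2=\sum_i(\gamma_i^2-1)\beta_i^2$, I get
\[
\mathbb{P}\big(f\circ\psi_{\beta\cdot\gamma}(x)=c_A\big)\ \ge\ \mathbb{P}\Big(\textstyle\sum_i(\gamma_i^2-1)\beta_i^2\le r\Big),
\]
\[
\mathbb{P}\big(f\circ\psi_{\beta\cdot\gamma}(x)=c_B\big)\ \le\ \mathbb{P}\Big(\textstyle\sum_i(\gamma_i^2-1)\beta_i^2\ge\theta\Big)\quad\text{for every }c_B\ne c_A.
\]
Hence, whenever $\gamma\in\Omega$, i.e.\ the inequality defining $\Omega$ holds, the left-hand lower bound strictly exceeds the right-hand upper bound valid for all competing classes, so $c_A$ is the unique maximiser of $c\mapsto\mathbb{P}(f\circ\psi_{\beta\cdot\gamma}(x)=c)$ and $g\circ\psi_\gamma(x)=c_A$.

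Finally I would verify the well-posedness used above: each $\beta_i^2$ is exponentially distributed with mean $2\sigma^2$, so $S(\beta)$ is a signed linear combination of independent exponentials; for every non-degenerate $\gamma$ it has a density and a continuous, strictly increasing CDF on its support, which makes $r,\theta$ unique for any $\underline{p_A},\overline{p_B}\in(0,1)$ and makes the Neyman--Pearson extremal sets exact (the boundary $\{dQ/dP=t\}$ is $P$-null, so no randomisation on it is required); coordinates with $\gamma_i=1$ drop out of $S$. The step I expect to be the main obstacle is precisely this likelihood-ratio reduction: keeping the signs of the coefficients $1-\gamma_i^{-2}$ straight (positive for $\gamma_i>1$, negative for $\gamma_i<1$) so that ``small likelihood ratio'' is consistently equivalent to ``small $S$'', and confirming the extremal sets are attained without boundary randomisation via the absolute continuity above. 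Once the two probability bounds are established, the composability rewriting and the $\arg\max$ conclusion are routine.
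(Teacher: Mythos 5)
Your proposal is correct and follows essentially the same route as the paper: the paper's Lemma on the ratio $p_{\beta\gamma}(z)/p_{\beta}(z)$ is exactly your observation that $dQ/dP$ is a monotone function of the quadratic form $S(z)=\sum_i(1-\gamma_i^{-2})z_i^2$, and the paper's step (1) is the Neyman--Pearson threshold argument you invoke, just written out as an explicit integral decomposition over $A$ and its complement. Your added well-posedness remarks (absolute continuity of $S(\beta)$, uniqueness of $r,\theta$, the degenerate case $\gamma_i=1$) are a welcome but inessential refinement of what the paper leaves implicit.
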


A proof is analogous to the one for a single parameter case and can be found in Appendix \ref{A}.

Usually we cannot evaluate the probabilities $p_A, p_B$. Thus we use the Clopper-Pearson \cite{clopper} bounds $p_B \leq \overline{p_B} < \underline{p_A} \leq p_A$ that can be calculated with an arbitrary high confidence probability $1-\alpha$. In case $\underline{p_A} \leq \frac{1}{2}$ the classifier $g$ abstains from answering (i.e. returns the ``abstain'' answer).

We present $\gamma_1, \gamma_2$ values for some $\underline{p_A}, \overline{p_B}$ in Table \ref{gammas} to show the certificates that can possibly be achieved.
\begin{table}[t]
\centering
\begin{tabular}{@{}rrrr@{}}
    \toprule
	$\underline{p_A}$ & $\overline{p_B}$ & $\gamma_1$ & $\gamma_2$\\ \midrule
        0.600 & 0.400 & 0.86 & 1.15 \\
		 & 0.200 & 0.71 & 1.33 \\
		0.700 & 0.300 & 0.72 & 1.32 \\
		 & 0.100 & 0.54 &  1.56 \\
		0.800 & 0.200 & 0.57 & 1.52 \\
		0.900 & 0.100 & 0.39 & 1.82 \\
		0.990 & 0.010 & 0.12 & 2.58 \\
		0.999 & 0.001 & 0.04 & 3.16 \\ \bottomrule
\end{tabular}
\caption{The calculated robustness certificates $(\gamma_1, \gamma_2)$ for the top two class probabilities $\underline{p_A}, \overline{p_B}$.}
\label{gammas}
\end{table}

\begin{figure*}[b]

\subcaptionbox{$\gamma = 0.5$}{\includegraphics[width=0.33\textwidth]{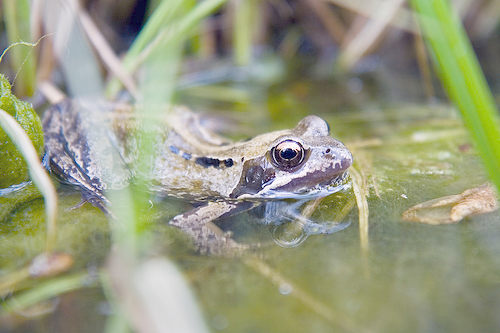}}
\subcaptionbox{$\gamma = 1$}{\includegraphics[width=0.33\textwidth]{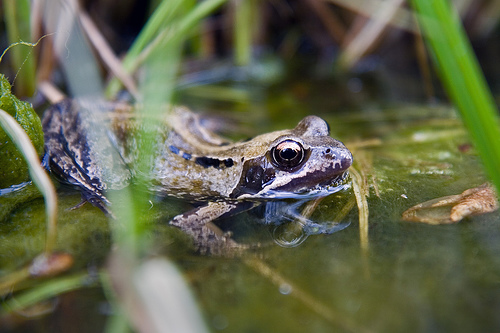}}
\subcaptionbox{$\gamma = 2$}{\includegraphics[width=0.33\textwidth]{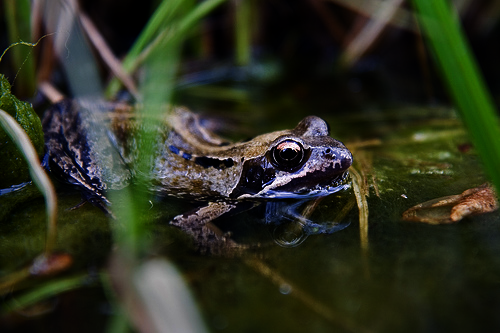}}

\caption{Demonstration of the gamma correction transformation on a frog image.}
\label{frog}
\end{figure*}

\section{Experiments}

There are a lot of multiplicatively composable transformations. We choose the gamma correction of images for our experiments due to its importance and simplicity.

Gamma correction is a very popular operation in image processing. There are a lot of works on how it is used for image enhancement \cite{VELUCHAMY2019329} and medical image processing \cite{Karuppanagounder,AGARWAL2017509}.
%Karuppanagounder
Yet to the best of our knowledge no works on certified robustness to gamma correction attacks have been released. Thus we are eager to both fill this gap and try out our new smoothing technique on it.

If we consider an input image in the RGB-format as a tensor $x$ with entries in $[0,1]$, then the gamma correction $G_{\gamma}$ with the gamma factor $\gamma$ simply raises $x$ to the power $\gamma$ in the element-wise manner: $G_{\gamma}(x)=x^{\gamma}$.
For a human eye it looks like change of brightness (Fig.\ref{frog}).

Obviously, this transformation is multiplicatively composable:
\begin{equation}
G_{\beta}\circ G_{\gamma}(x)=(x^{\gamma})^{\beta}=x^{\gamma\cdot\beta}=G_{\gamma\cdot\beta}(x).
\end{equation}
But it should be noticed that in reality a gamma corrected image is likely to be converted to some image format with colour channel of limited width afterwards. In that case some information is lost and the resulting transformation is no longer composable. Thus we have two settings: 1) ``idealized'' --- when the colour channel is so wide that the conversion error is negligible in comparison with the rounding error of a machine; 2) ``realistic'' --- when the conversion error is significant and cannot be ignored.

Other works usually consider idealized setting where the only type of error we encounter is an interpolation error. Following their lead we conduct most of our experiments in this setting though we also show how to deal with conversion errors.

\subsection{Idealized setting}

As it is mentioned before the multiplicative parameters $\beta, \gamma$ can be converted into the additive ones $a = \log_c{\gamma}$, $b =\log_c{\beta}$ via logarithm: \begin{equation}
\gamma\cdot\beta = c^{\log_c{\gamma}}\cdot c^{\log_c{\beta}} = c^a \cdot c^b = c^{a+b},
\end{equation}
where $c$ is some fixed base. Thus one can smooth over these new parameters with the standard Gaussian distribution to obtain certificates on the original ones. The certified accuracy for a factor $\gamma$ is the proportion of correctly classified test images whose certificates contain the value $\gamma$. All certificates are obtained with the mistake probability $\alpha=0.001$.

\begin{figure*}[t]
\subcaptionbox{ResNet-20 on CIFAR-10}{\includegraphics[width=0.45\textwidth]{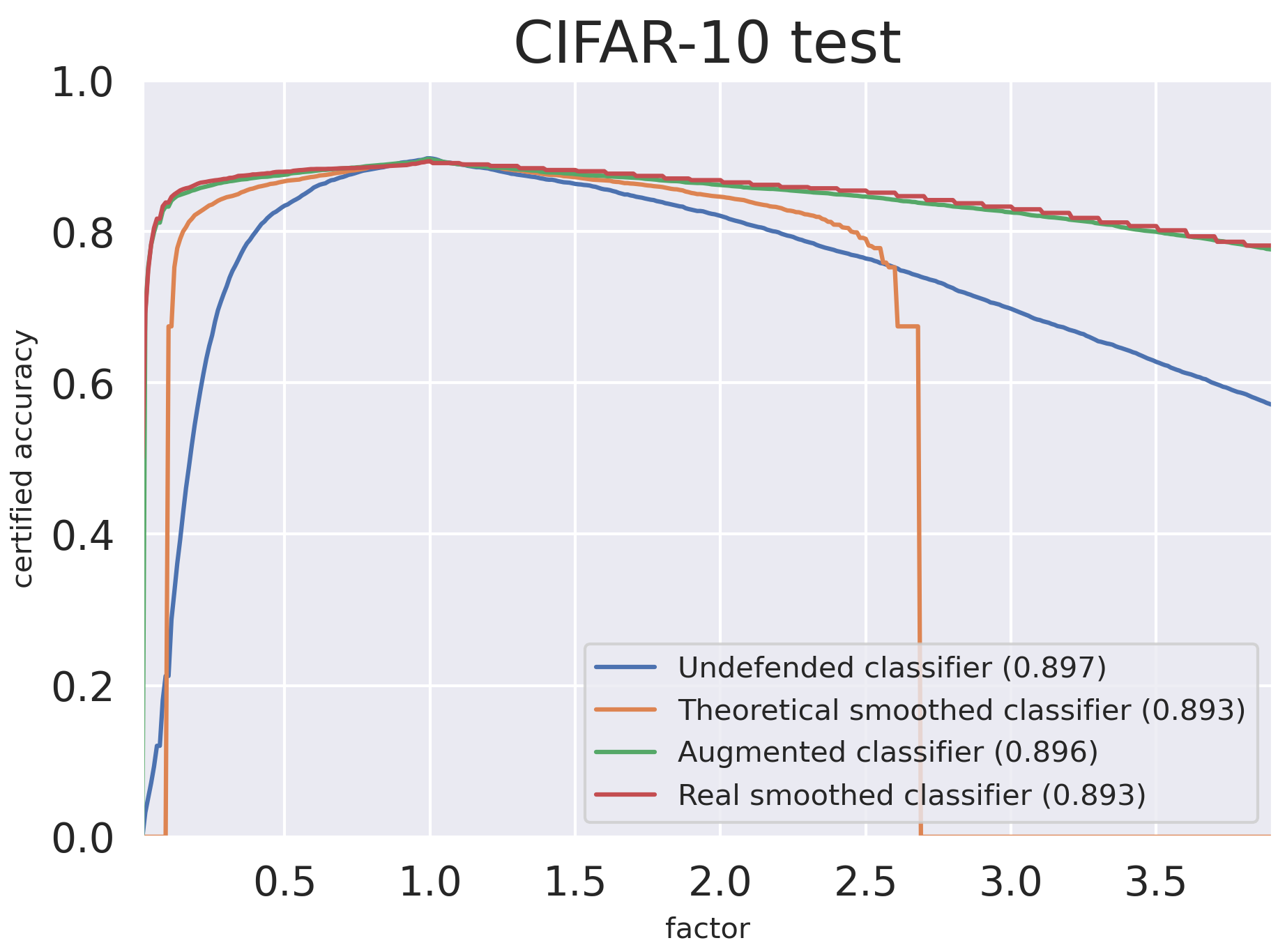}}
\hspace{1.2 cm}
\subcaptionbox{ResNet-50 on ImageNet}{\includegraphics[width=0.45\textwidth]{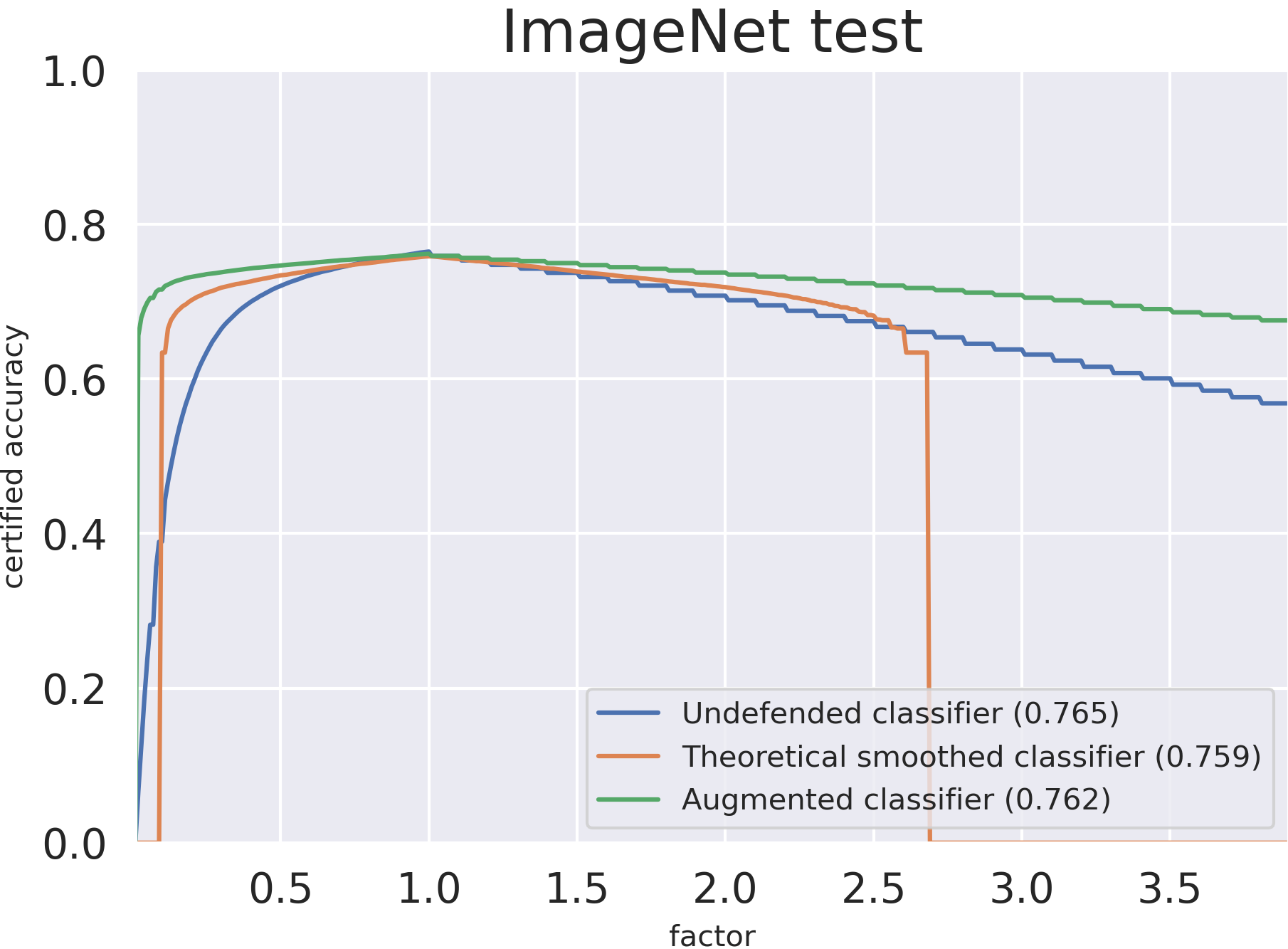}}

\caption{Comparison of theoretically predicted certificates with empirically obtained ones for the gamma correction perturbation.}
\label{comparison}
\end{figure*}

We compare the theoretical certificates for the smoothed classifier with empirical ones for: a) the same smoothed classifier (real smoothed classifier), b) the base architecture without augmentations on training or smoothing (undefended classifier), and c) the base architecture with augmentations on training but without smoothing (augmented classifier) (Fig.\ref{comparison}a). Numbers in brackets show values at 1, that is benign or clean accuracy: the portion of unperturbed images that were correctly classified.

The empirical certificates are obtained via evaluations of classifiers' predictions on perturbed images. For every image $x$ starting from gamma factor $\gamma_0=1$ we evaluate classifier's predictions on $G_{\gamma_k}(x)$ for every $\gamma_k=\gamma_{k-1}+\epsilon$ with a step $\epsilon=0.01$. If the classifier predicts a wrong label on the input $G_{\gamma_k}(x)$ for some $\gamma_k$, we assume that it is robust only in the interval $[1, \gamma_{k-1}]$ and stop the certification procedure. The same procedure is used to find the left end of a certification interval. We use $\epsilon=0.1$ searching for the right ends of certificate intervals for some classifiers (you can spot them by the less smooth right parts of corresponding graphs) due to computational complexity of this process. The resulting certificate can be over optimistic but still it provides a good approximation of the actual robustness. One can see that the real accuracy is bounded from below by the theoretically guaranteed one, which is expected.

We also observe that the smoothed classifier has almost the same actual robustness as the just augmented one. Thus it can be concluded that smoothing provides theoretical certificates at a cost of extra time on inference but does not significantly impact the actual robustness of the augmented base classifier.

The randomized smoothing method is easily scalable to deep models and large datasets. We repeat certification experiments for ResNet-50 on ImageNet \cite{inproceedings} (Fig.\ref{comparison}b) but do not approximate actual robustness due to computational complexity of this process.

\begin{figure*}[b!]
\includegraphics[width=0.33\textwidth]{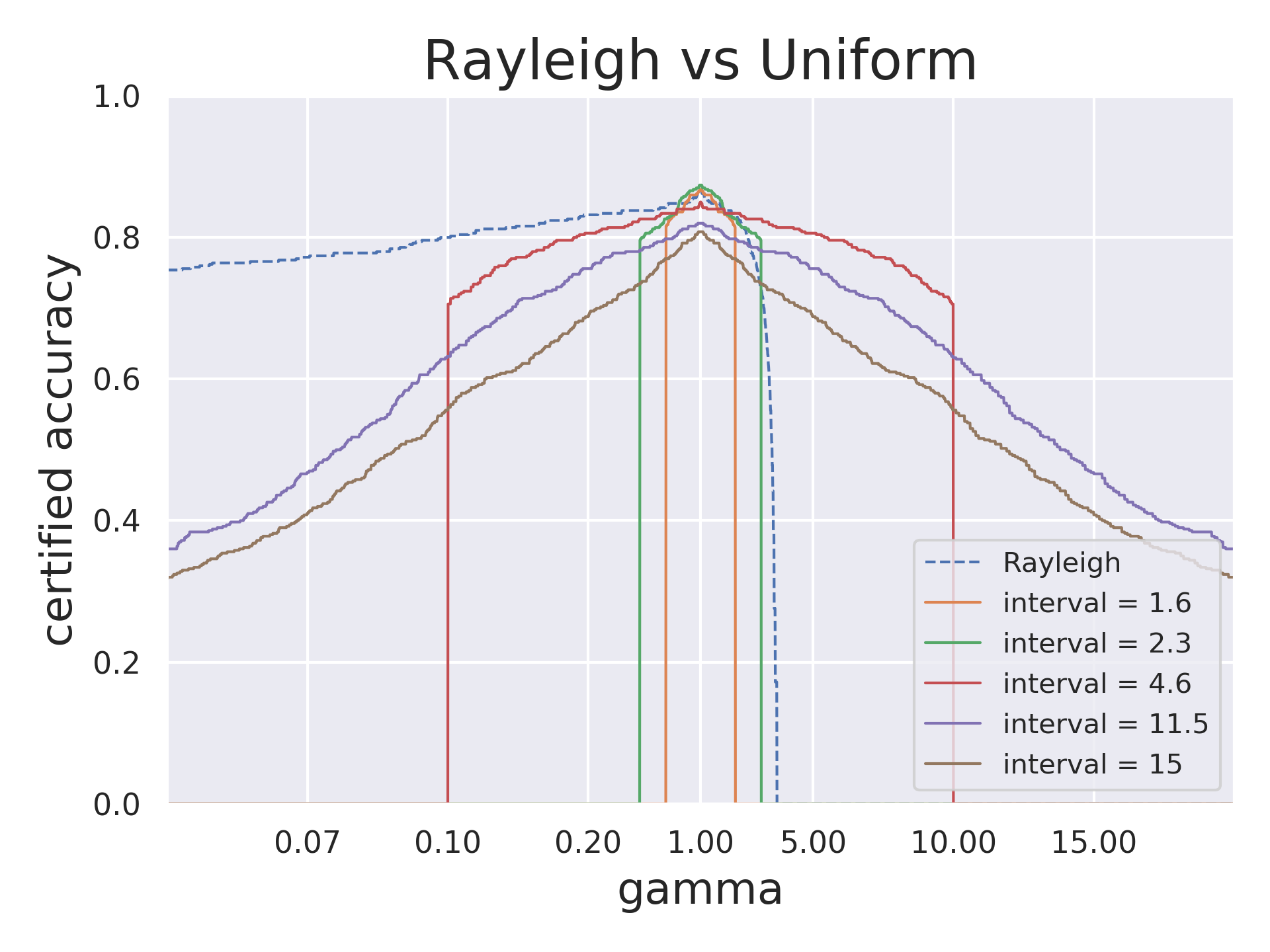}
\includegraphics[width=0.33\textwidth]{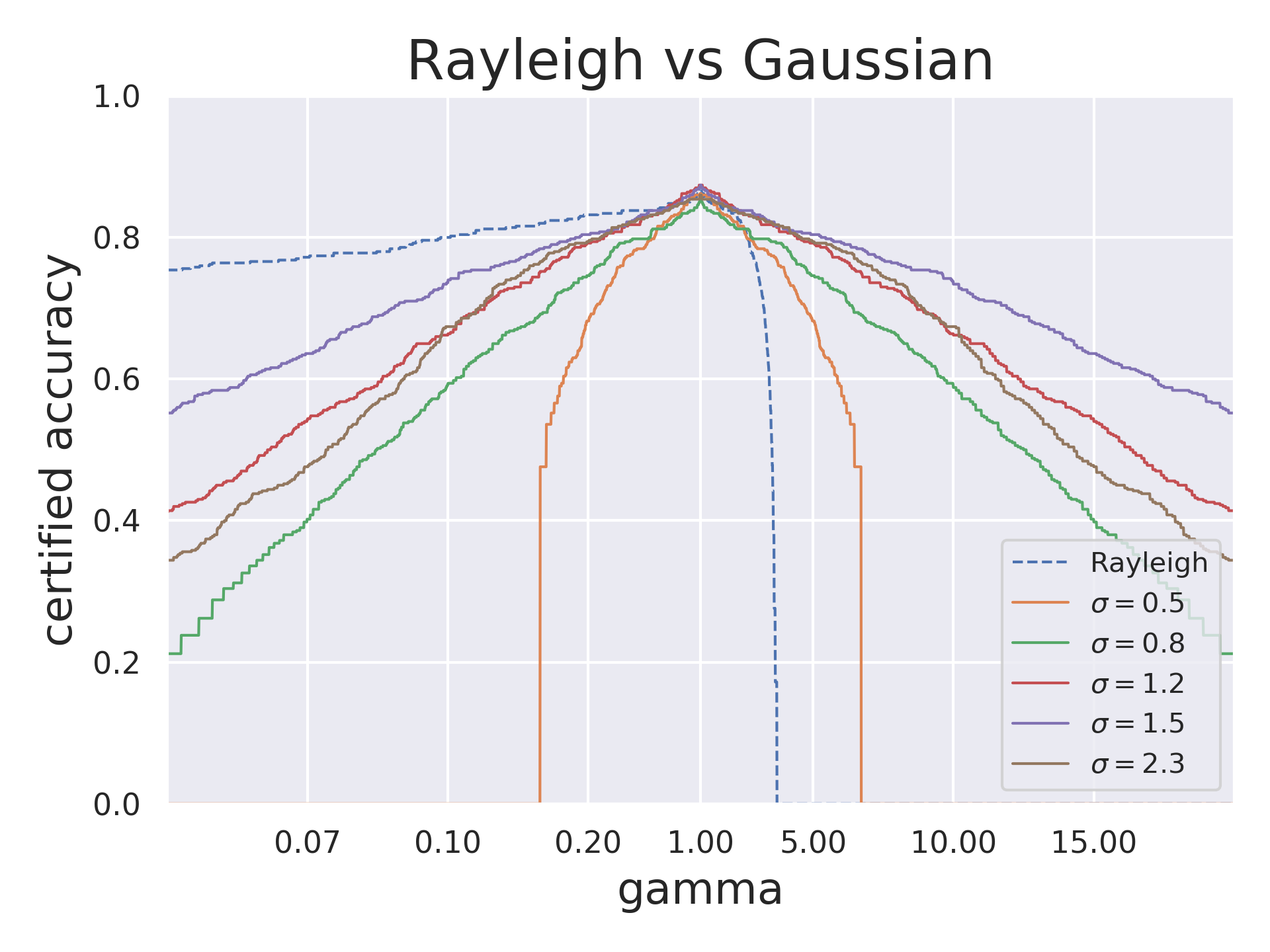}
\includegraphics[width=0.33\textwidth]{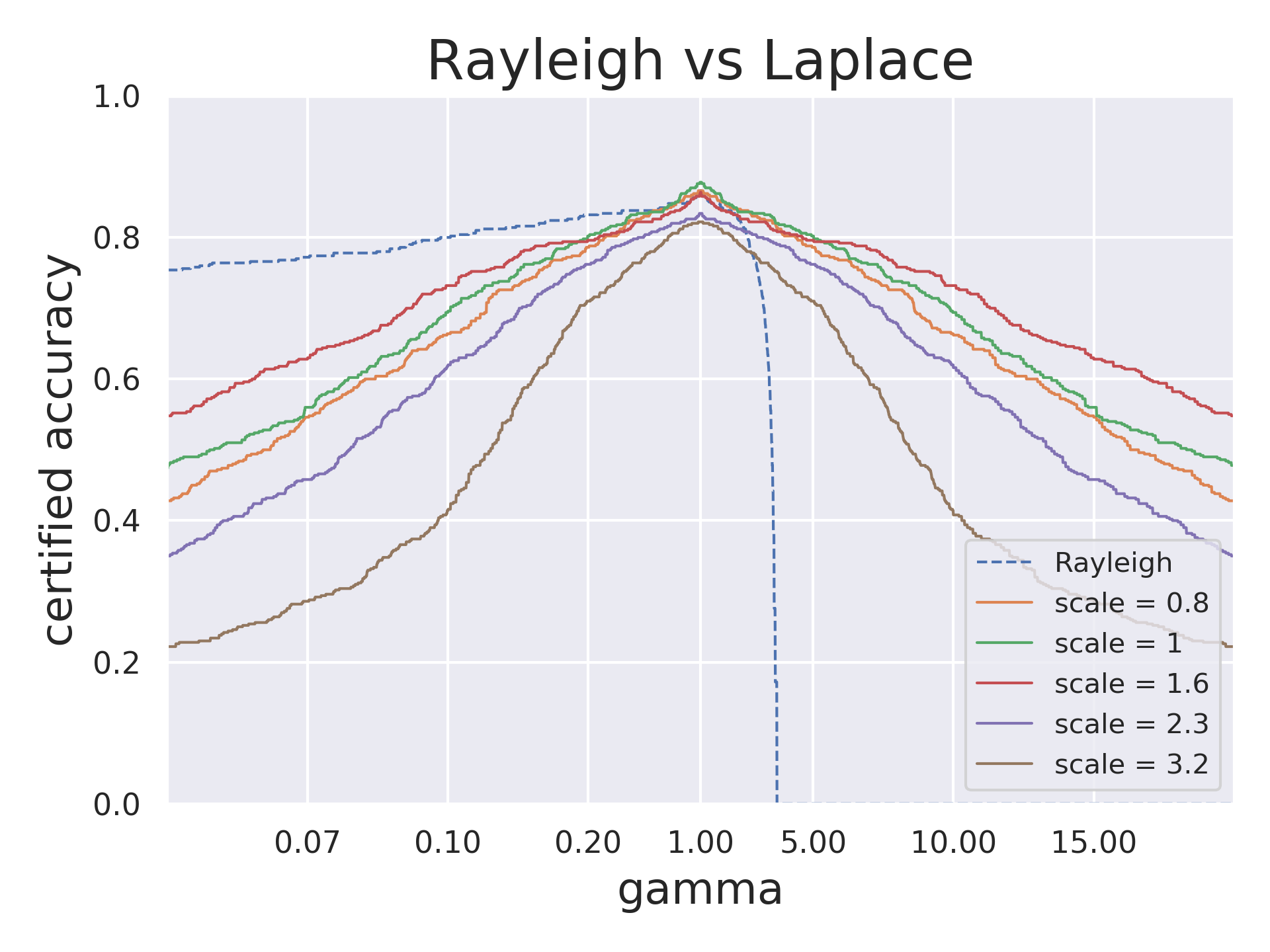}

\caption{Comparison of Rayleigh smoothing with uniform, Gaussian and Laplace smoothing for ResNet-110 on CIFAR-10 for the gamma correction perturbation. The base is equal to $e$ everywhere, the mean equals 0 for all alternate distributions and we only change the length of support (uniform), standard deviation (Gaussian) and scale (Laplace) parameters.}
\label{distributions}
\end{figure*}

In order to compare Rayleigh smoothing with other smoothing distributions we train several smoothed ResNet-110 classifiers on CIFAR-10 dataset with uniform, Gaussian and Laplace distributions following TSS protocol (including consistency regularization \cite{NEURIPS2020_77330e13}). We also use the TSS certification protocol with $\alpha = 0.001, n = 100000$ and 500 images randomly picked from the test set for certification. The results are presented at Fig.\ref{distributions}. To ease comparison the factor axes have been re-scaled for factors smaller than 1 to represent uniform multiplicative scale and a dotted line has been used for Rayleigh smoothing.

One can see that the Rayleigh smoothing outperforms all competitors for small gamma factors. Apparently asymmetric nature of the Rayleigh distribution allows to obtain much better certificates for small perturbation values than it would be possible with any conventional symmetric distribution. See Appendix \ref{E} for more detailed analysis of this phenomenon.

\subsection{Realistic setting}

Our study shows that if we convert gamma corrected images to some format with 8 bits per colour channel (RGB true colour), then the conversion error is usually too big to be ignored. Thus we need to make the base classifier $f$ robust in some $l_2$-ball with a radius larger then the conversion error for the most images. For these needs Fischer et al. \cite{fischer} propose to initially smooth the base classifier with the additive Gaussian noise to make it robust in $l_2$-norm. This new classifier can then be used as a base one for smoothing over transformation parameters. Thus instead of $n$ samples for the Monte-Carlo simulation algorithm we use $n_{\varepsilon}$ samples to simulate the smoothing in $l_2$-norm and $n_{\gamma}$ samples to simulate the smoothing over the gamma correction. Since we need $n_{\varepsilon}$ samples for every gamma correction sample, the resulting number of samples is equal to $n_{\varepsilon}\cdot n_{\gamma}$.

There are two types of guarantees that can be obtained by this method: 1) a distributional guarantee, when the conversion error is pre-calculated on the training dataset; 2) an individual guarantee, when we calculate the conversion error for each input at inference time.

For a given input image $x$ and the gamma correction transformation $G$ we define
\begin{equation}
\varepsilon(\beta,\gamma, x):= G_{\beta}\circ G_{\gamma}(x)-G_{\beta\cdot\gamma}(x)
\end{equation}
as the conversion error for gamma factors $\beta,\gamma$. In the idealized setting we assume $\varepsilon(\beta,\gamma,x)=0$ for all inputs. But now we need to estimate this value with some upper bound $E$.

In case of distributional guarantees the conversion error can be estimated with $E$ such that
\begin{equation}
q_E=\mathbb{P}_{x\sim D, \beta\sim Rayleigh}\left(\max\limits_{\gamma\in\Gamma}||\varepsilon(\beta,\gamma,x)||_2 \leq E\right),
\end{equation}
where $D$ is the data distribution, $\Gamma$ is a fixed interval of possible attacks and $1 - q_E$ is a desirable error rate.

The conversion errors $E$ are estimated for intervals $\Gamma$ of gamma factors $\Gamma_1 = [0.86, 1.15]$ and $\Gamma_2 = [0.71, 1.33]$ with the same error rate $q_E=0.9$ as $E_1 = 0.18$ and $E_2 = 0.22$ respectively. These intervals are chosen randomly but in such a way that they could be certificate intervals obtained with the Rayleigh smoothing for some input images, i.e. there exist such $\underline{p_A}, \overline{p_B}$ that the corresponding certificates obtained via Theorem \ref{main} equal to $\Gamma_1$ or $\Gamma_2$. Then we smooth the base classifier with the Gaussian noise with a deviation $\sigma=0.25$, that is found experimentally to deliver the best results for our base models and datasets.

The resulting mistake probability $\rho$ of that smoothed classifier on a ball $B_E(x)$ for a given $x$ can be estimated as the sum of mistake probabilities on all steps:
\begin{equation}
\rho \leq \alpha + 1 - q_E + \alpha_E,
\end{equation}
where $1-\alpha$ is the confidence of the base classifier, $q_E$ is the probabilistic guarantee for $E$ and $1-\alpha_E$ is the confidence with which $E$ is obtained. In all our experiments we set $\alpha_E = 0.01$.

\begin{figure}[t]
	\begin{center}
		\includegraphics[width = 0.45\textwidth]{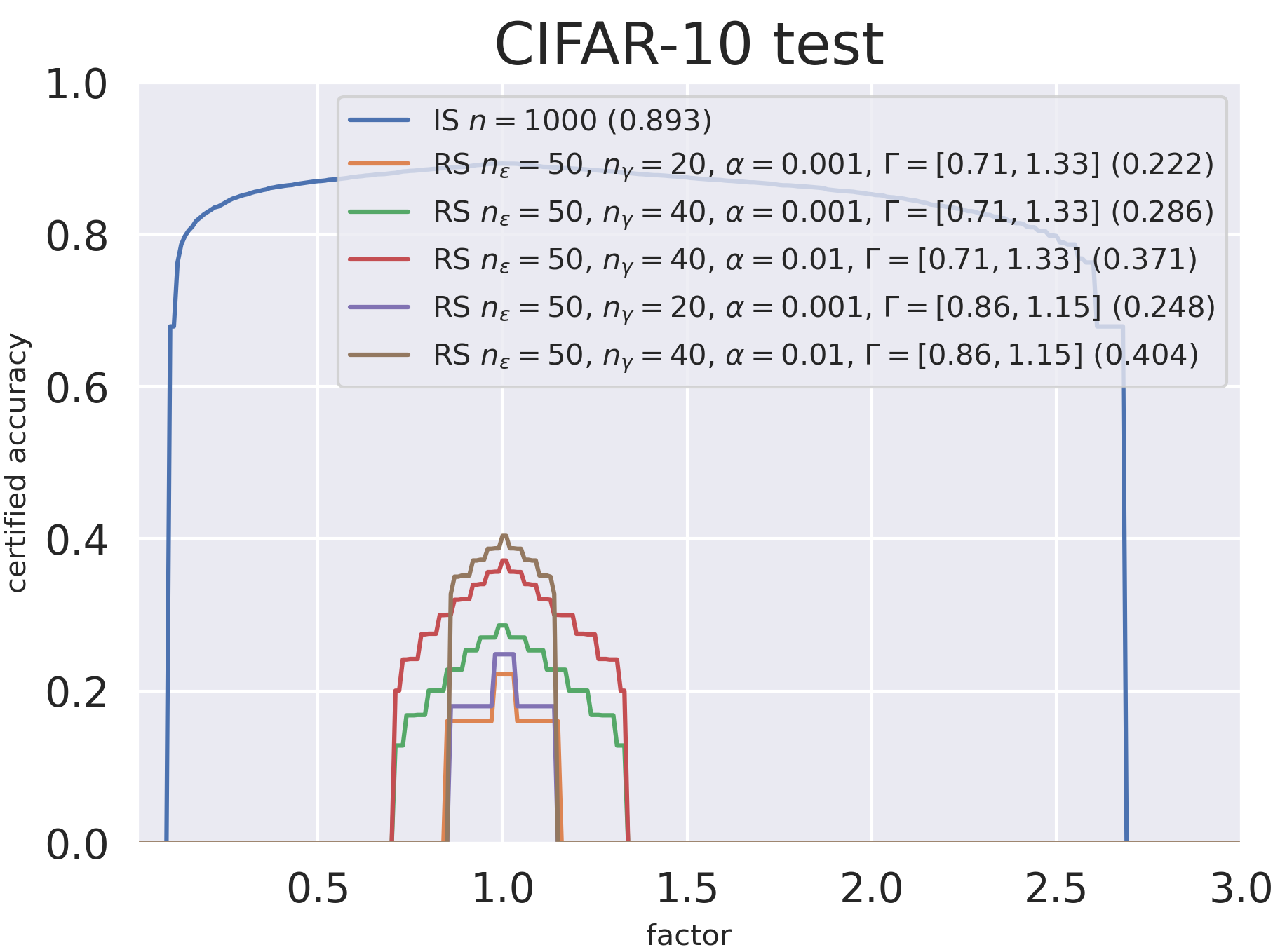}
	\end{center}
	\caption{Certified accuracy of smoothed ResNet-20 classifiers in the idealized (IS) and the realistic (RS) settings on CIFAR-10 for the gamma correction perturbation.}
	\label{real_cifar}
\end{figure}

This smoothed classifier is then smoothed with the Rayleigh distributed gamma correction. For the certification procedure the probabilities $\underline{p_A},\overline{p_B}$ are adjusted by $\rho$:
\begin{equation} \underline{p_A}^{\prime}=\underline{p_A} - \rho,\; \overline{p_B}^{\prime} = \overline{p_B} + \rho
\end{equation}
such that we take into consideration the mistake probability of the base classifier we have estimated previously. In order to preserve the correctness of conversion error estimations the obtained certificates are clipped to the selected attack intervals $\Gamma$.

We provide distributional guarantees for smoothed classifiers on CIFAR-10 for different numbers of samples $n_{\varepsilon},n_{\gamma}$, mistake probabilities $\alpha$ and attack intervals $\Gamma$ (Fig. \ref{real_cifar}). A significant drop in accuracy can be seen when we switch from the idealized setting to the realistic one. It happens because we are not able to preserve the same number of samples for smoothing over the gamma parameter (due to the computational complexity) as well as because our base classifier has to be robust in $l_2$-norm.

\begin{figure*}[t]

\subcaptionbox{$\gamma = 0.5$}{\includegraphics[width=0.33\textwidth]{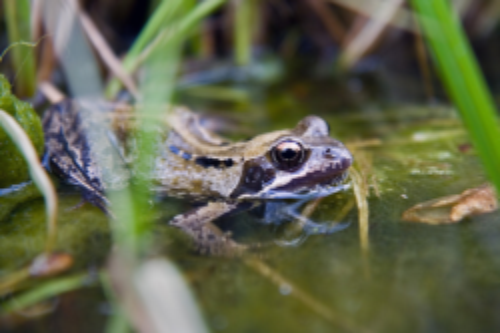}}
\subcaptionbox{$\gamma = 1$}{\includegraphics[width=0.33\textwidth]{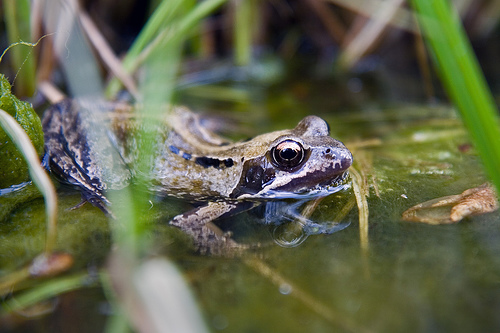}}
\subcaptionbox{$\gamma = 2$}{\includegraphics[width=0.33\textwidth]{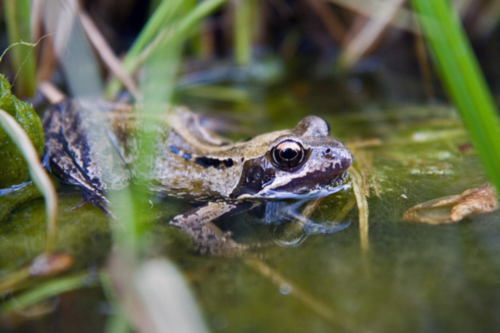}}

\caption{Demonstration of the scaling interpolation error on a frog image.}
\label{frog_scaling}
\end{figure*}

One can see that the smaller attack interval allows the lower estimation $E$ (which is expected) and thus the higher accuracy can be obtained. It should also be noticed that a greater mistake probability $\alpha$ increases the accuracy due to fewer ``abstain'' answers by both smoothed classifiers, but at the same time a greater $\alpha$ decreases the resulting confidence with which certificates are obtained.

The achieved certification results can be found in Appendix \ref{D}.

\subsection{Experiments with scaling interpolation error}

The other type of distortion we experiment with is a scaling interpolation error. By that we mean a transformation $S_{r}(x)$ which scales the image $x$ by factor $r$ followed by re-scaling to the original size. For a human eye it looks like a loss of clarity (Fig. \ref{frog_scaling}). Note that this transformation is not the same as scaling (e.g. like here \cite{li2020provable}) where there is no re-scaling to the original size and black padding is used instead. The resulting interpolation error is a perturbation we want our classifier to be robust against. It is clear that in the idealized setting scale factors multiply to each other when we scale an image several times and in that case the interpolation error does not exist. Thus this transformation is multiplicatively composable. But in reality we face non-zero interpolation errors that cannot be ignored. Though it is possible to handle this problem via double smoothing like it is done with the gamma correction, we find that the average interpolation error on CIFAR-10 images is too great to be handled via Gaussian smoothing with satisfactory accuracy. Fischer et al. \cite{fischer} encounter the same problem dealing with rotations. They propose several techniques to reduce interpolation errors (preprocessing with vignetting and low-pass filter, noise partitioning). Some of these techniques can be applied to our case but we find that the problem persists even for high-resolution images (e.g. ImageNet) that is not the case for rotations.

It should be noticed that down-scaling presents a greater challenge than up-scaling since the latter is theoretically invertible and results in smaller interpolation errors. Therefore it seems reasonable to use non-symmetric (with respect to multiplication) distributions for smoothing like the Rayleigh one, which has a distinct bias towards smaller values rather than bigger ones. One can expect such distributions to provide better certification results than the standard Gaussian and other symmetric distributions for perturbations whose symmetric parameter values do not cause the same level of distortion.

Of course one could use the TSS framework \cite{li2020provable} to deal with the interpolation errors but this approach does not require any smoothing distribution and thus it does not allow to exploit the asymmetrical nature of scaling distortions via asymmetrical probability distributions.

\section{Further research}

There are some recent works revealing the limitations of the standard Gaussian smoothing \cite{hayes,zhang}, especially in case of multidimensional perturbations with $l_p$-ball certificates for large $p$. It seems reasonable to search for alternative smoothing distributions not only among those suitable for additive parameters but also among distributions of multiplicative parameters. Indeed, it is interesting to find a good family of distributions for smoothing over multiplicative parameters, so that one could vary the variance like in additive case with uniform, Gaussian and Laplace distributions. Moreover, an example of the scaling perturbation shows that we may be more interested in asymmetric distributions which are able to exploit asymmetrical nature of certain transformations. There is a chance that it will allow to avoid the pitfalls and limitations of conventional smoothing distributions and achieve better results in the robustness certification tasks.

Also this technique can be applied to other types of perturbations and even to other types of data (audio signals, video, etc.). Therefore further experiments can include construction of certifiably robust audio and video classifiers based on the Rayleigh smoothing and comparison of them with those obtained via other methods.

\section{Related work}

In this section we want to discuss similar works in certification theory and point out some key differences between them and the present paper. Usually randomized smoothing is applied to classifiers having $R^d$ as their domain, resulting in the threat model when the signal is directly fed into the model without interpolation, rounding or clipping (like in \cite{li2020provable}). In real life it is unlikely that we will be able to perturb the original signal itself. Usually we have only a digital record of the signal (e.g. an image) and cannot access the original. Thus this setting seems unrealistic and we call it "idealized" in the paper. We go further and propose to consider "realistic" attacks that take into account limitations of digital image processing (limited channel width and bounds on minimal and maximal pixel values). Notice that gamma correction preserves pixel values within $[0,1]^3$ RGB-area  (unlike brightness or contrast changes) and thus can be easily certified against in "realistic" setting via above mentioned double smoothing technique.

The realistic setting results in a big certified accuracy drop, but we believe that it is a much more adequate way to simulate real-world attacks.

\section{Conclusion}

This work proposes a novel approach of randomized smoothing directly over multiplicative parameters of input transformations. We prove certification theorems and provide experimental comparison of the proposed method with conventional smoothing distributions for gamma correction transformations and show that it cannot be consistently outperformed by the latter. We also propose to consider realistic attacks which include additional conversion errors caused by the limited color channel width. As a result, the first certifiably robust image classifiers in the idealized and realistic settings are constructed for the gamma correction perturbation.

\bibliographystyle{named}
\bibliography{ijcai22}

\onecolumn
\appendix
\section*{Appendix}
\addcontentsline{toc}{section}{Appendix}
\renewcommand{\thesubsection}{\Alph{subsection}}
\subsection{Proofs}\label{A}

Here we prove the main theorems that we stated in the work.

\begin{theorem}
	Let ${x\in {\mathbb{R}^m}}$, ${f:\mathbb{R}^m \rightarrow Y}$ be a classifier, ${\psi_{\beta}:\mathbb{R}^m \rightarrow \mathbb{R}^m}$ be a multiplicatively composable  transformation for ${\beta\sim Rayleigh(\sigma)}$ and $g(x) = \arg\max\limits_{c}{\mathbb{P}_{\beta}(f\circ\psi_{\beta}(x)=c)}$. Denote $${p_A = \mathbb{P}_{\beta}(f\circ \psi_{\beta}(x)=c_A)},$$ $${p_B = \max\limits_{c_B \not= c_A}{ \mathbb{P}_{\beta}(f\circ\psi_{\beta}(x)}=c_B)}.$$ If
	$$p_A \geq \underline{p_A} > \overline{p_B}\geq p_B,$$
	then $g\circ\psi_{\gamma}(x)=c_A$ for all $\gamma$ satisfying $\gamma_1 < \gamma < \gamma_2$, where $\gamma_1, \gamma_2$ are the only solutions of the following equations
	$$F(\gamma_1^{-1}F^{-1}(\overline{p_B})) + F(\gamma_1^{-1}F^{-1}(1-\underline{p_A}))=1,$$
	$$F(\gamma_2^{-1}F^{-1}(\underline{p_A})) + F(\gamma_2^{-1}F^{-1}(1-\overline{p_B}))=1,$$
	$$F(z) = 1 - e^{-z^2/(2\sigma^2)} \text{ is the CDF of } \beta.$$
\end{theorem}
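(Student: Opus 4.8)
The plan is to follow the Neyman--Pearson route of \cite{cohen,fischer}, adapted to the multiplicative group action. First I would use multiplicative composability to rewrite the smoothed prediction at the perturbed point: for every class $c$,
\[
\mathbb{P}_{\beta}\big(f\circ\psi_{\beta}(\psi_{\gamma}(x))=c\big)=\mathbb{P}_{\beta}\big(f\circ\psi_{\beta\gamma}(x)=c\big)=\mathbb{P}_{Z}\big(f\circ\psi_{Z}(x)=c\big),
\]
where $Z:=\gamma\beta$ has CDF $z\mapsto F(\gamma^{-1}z)$; write $\mu$ for the law of $\beta$ (CDF $F$) and $\nu_{\gamma}$ for the law of $Z$. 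Setting $h_c(z):=\mathbb{1}[f(\psi_z(x))=c]$, we have $p_c=\int h_c\,d\mu$ and $\mathbb{P}_{Z}(f\circ\psi_Z(x)=c)=\int h_c\,d\nu_{\gamma}$, so the claim $g(\psi_{\gamma}(x))=c_A$ amounts to $\int h_{c_A}\,d\nu_{\gamma}>\int h_c\,d\nu_{\gamma}$ for all $c\neq c_A$. Since $\int h_c\,d\mu\le p_B\le\overline{p_B}$ and $\int h_{c_A}\,d\mu\ge\underline{p_A}$, it suffices to compare $\underline{p_A^{\nu_\gamma}}:=\min\{\int h\,d\nu_\gamma:\ 0\le h\le 1,\ \int h\,d\mu\ge\underline{p_A}\}$ with $\overline{p_B^{\nu_\gamma}}:=\max\{\int h\,d\nu_\gamma:\ 0\le h\le 1,\ \int h\,d\mu\le\overline{p_B}\}$ and to check $\underline{p_A^{\nu_\gamma}}>\overline{p_B^{\nu_\gamma}}$.

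Next I would invoke the Neyman--Pearson lemma: the extremal $h$ in each problem is a threshold of the likelihood ratio $\frac{d\nu_{\gamma}}{d\mu}(z)=\gamma^{-2}e^{\frac{z^2}{2\sigma^2}(1-\gamma^{-2})}$. The point is that this ratio is strictly monotone in $z$ --- increasing when $\gamma>1$ (since $1-\gamma^{-2}>0$) and decreasing when $\gamma<1$ --- and the $\mu$-quantiles $F^{-1}(t)$ are strictly positive on $(0,1)$, so the extremal sets are half-lines $\{z\le t\}$ or $\{z\ge t\}$ with $t$ a $\mu$-quantile. Evaluating: for $\gamma>1$ one gets $\underline{p_A^{\nu_\gamma}}=F(\gamma^{-1}F^{-1}(\underline{p_A}))$ (take $h=\mathbb{1}[z\le F^{-1}(\underline{p_A})]$) and $\overline{p_B^{\nu_\gamma}}=1-F(\gamma^{-1}F^{-1}(1-\overline{p_B}))$ (take $h=\mathbb{1}[z\ge F^{-1}(1-\overline{p_B})]$); for $\gamma<1$ the two half-lines swap sides and one gets $\underline{p_A^{\nu_\gamma}}=1-F(\gamma^{-1}F^{-1}(1-\underline{p_A}))$ and $\overline{p_B^{\nu_\gamma}}=F(\gamma^{-1}F^{-1}(\overline{p_B}))$.

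Then the certified condition $\underline{p_A^{\nu_\gamma}}>\overline{p_B^{\nu_\gamma}}$ rearranges, for $\gamma>1$, into $F(\gamma^{-1}F^{-1}(\underline{p_A}))+F(\gamma^{-1}F^{-1}(1-\overline{p_B}))>1$ and, for $\gamma<1$, into $F(\gamma^{-1}F^{-1}(\overline{p_B}))+F(\gamma^{-1}F^{-1}(1-\underline{p_A}))<1$ --- exactly the defining equations of $\gamma_2$ and $\gamma_1$ at the boundary. To finish I would note that each left-hand side, as a function of $\gamma$, is a sum of compositions of the increasing $F$ with the decreasing map $\gamma\mapsto\gamma^{-1}(\text{positive constant})$, hence strictly decreasing; at $\gamma=1$ the first equals $\underline{p_A}+1-\overline{p_B}>1$ and the second equals $\overline{p_B}+1-\underline{p_A}<1$, while the limits are $0$ as $\gamma\to\infty$ and $2$ as $\gamma\to0^+$. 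Thus each equation has a unique root, $\gamma_2>1$ and $\gamma_1<1$, the condition holds precisely on $(1,\gamma_2)$ and $(\gamma_1,1)$, and at $\gamma=1$ we have $\nu_1=\mu$ so $g(x)=c_A$ trivially; together this gives $g(\psi_{\gamma}(x))=c_A$ for all $\gamma_1<\gamma<\gamma_2$.

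The main obstacle is the monotone-likelihood-ratio step: one must carefully track the sign of $1-\gamma^{-2}$ and the positivity of the quantiles to be sure Neyman--Pearson yields a single half-line (and which side), and one must check that the \emph{separate} one-sided bounds for $c_A$ and for the runner-up chain correctly --- this is legitimate because the genuine indicators satisfy $h_{c_A}+h_c\le1$ pointwise, so the individually valid inequalities $\int h_{c_A}\,d\nu_\gamma\ge\underline{p_A^{\nu_\gamma}}>\overline{p_B^{\nu_\gamma}}\ge\int h_c\,d\nu_\gamma$ compose into the desired strict inequality. Routine points I would not belabor are measurability of $h_c$, the justification of the change of variables $Z=\gamma\beta$, and the reduction of the $\arg\max$ comparison to the pairwise one. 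I would also remark, as the paper does, that the same bound follows from the additive (logarithmic) reduction or from the TSS / constrained-adversarial-certification frameworks.
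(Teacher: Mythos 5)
Your proposal is correct and is essentially the paper's own argument: the paper's Lemma on the ratio $p_{\beta\gamma}(z)/p_{\beta}(z)=\gamma^{-2}\exp\bigl(z^2(1-\gamma^{-2})/(2\sigma^2)\bigr)$ and its threshold $t$ is exactly your monotone-likelihood-ratio / Neyman--Pearson step, and your extremal half-lines coincide with the paper's sets $A=\{z\le F^{-1}(\underline{p_A})\}$, $B=\{z\ge F^{-1}(1-\overline{p_B})\}$ for $\gamma\ge 1$ (swapped for $\gamma<1$), leading to the same two boundary equations for $\gamma_1,\gamma_2$. No substantive differences to report.
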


The proof is analogous to the one in \cite{fischer}. We try to preserve its structure and used abbreviations.

\begin{proof}
	Let us show that $$\mathbb{P}_{\beta}(f\circ \psi_{\beta\gamma}(x)=c_A)\stackrel{(1)}{\geq} \mathbb{P}_{\beta}(\beta\gamma\in A)$$
	$$\stackrel{(2)}{>} \mathbb{P}_{\beta}(\beta\gamma\in B)\stackrel{(3)}{\geq} \mathbb{P}_{\beta}(f\circ \psi_{\beta\gamma}(x)=c_B)$$
	for all $\gamma_1 < \gamma < \gamma_2$, where $$A = \{z\mid z\leq F^{-1}(\underline{p_A})\},\; B = \{z\mid z\geq F^{-1}(1-\overline{p_B})\}.$$
	Then the required statement for $\gamma \geq 1$ follows directly from these inequalities.
	
	1) Assume $p_{\zeta}(z)$ is the PDF of random variable $\zeta$. Then $$\mathbb{P}(f\circ\psi_{\beta\gamma}(x)=c_A)-\mathbb{P}(\beta\gamma\in A)$$
	$$= \int\limits_{\mathbb{R}}[f\circ\psi_z(x)=c_A]p_{\beta\gamma}(z)dz - \int\limits_{A}p_{\beta\gamma}(z)dz$$
	
	$$=\left(\int\limits_{\mathbb{R}\setminus A}[f\circ\psi_z(x)=c_A]p_{\beta\gamma}(z)dz + \int\limits_{A}[f\circ\psi_z(x)=c_A]p_{\beta\gamma}(z)dz\right)$$
	
	$$-\left(\int\limits_A[f\circ\psi_z(x)=c_A]p_{\beta\gamma}(z)dz + \int\limits_A[f\circ\psi_z(x)\not=c_A]p_{\beta\gamma}(z)dz\right)$$
	
	$$= \int\limits_{\mathbb{R}\setminus A}[f\circ\psi_z(x)=c_A]p_{\beta\gamma}(z)dz - \int\limits_A[f\circ\psi_z(x)\not=c_A]p_{\beta\gamma}(z)dz$$
	
	$$\stackrel{Lemma\, \ref{tech}}{\geq} t\left(\int\limits_{\mathbb{R}\setminus A}[f\circ\psi_z(x)=c_A]p_{\beta}(z)dz - \int\limits_A[f\circ\psi_z(x)\not=c_A]p_{\beta}(z)dz\right)$$
	
	$$=t\left(\int\limits_{\mathbb{R}}[f\circ\psi_z(x)=c_A]p_{\beta}(z)dz - \int\limits_Ap_{\beta}(z)dz\right)$$
	$$= t(p_A - \underline{p_A}) \geq 0.$$
	
	2) The inequality $$F\left(\gamma^{-1}F^{-1}(\underline{p_A})\right) = \mathbb{P}\left(\beta\gamma\leq F^{-1}(\underline{p_A})\right) = \mathbb{P}(\beta\gamma\in A) > \mathbb{P}(\beta\gamma\in B)$$
	
	$$= 1 - \mathbb{P}\left(\beta\gamma \leq F^{-1}(1-\overline{p_B})\right) = 1 - F\left(\gamma^{-1}F^{-1}(1-\overline{p_B})\right)$$
	clearly holds in case $\gamma = 1$. On the other hand there is such a huge $\gamma$ that it does not hold. From monotony and continuity of the function $F$ it follows that there is such a $\gamma_2$ that the inequality holds if and only if $1\leq\gamma < \gamma_2$. This $\gamma_2$ is the only solution of the following equation:
	$$F\left(\gamma^{-1}F^{-1}(\underline{p_A})\right) = 1 - F\left(\gamma^{-1}F^{-1}(1-\overline{p_B})\right).$$
	
	3) The proof is analogous to the proof for (1).
	
	The case $\gamma < 1$ is analogous, but with
	$$A = \{z\mid z \geq F^{-1}(1-\underline{p_A})\},\; B = \{z \mid z \leq F^{-1}(\overline{p_B})\}.$$
\end{proof}

\begin{lemma} \label{tech}
	There exists $t>0$ such that $p_{\beta\gamma}(z)\leq t \cdot p_{\beta}(z)$ for all $z\in A$. And further $p_{\beta\gamma}(z) > t\cdot p_{\beta}(z)$ for all $a\in \mathbb{R}\setminus A$.
\end{lemma}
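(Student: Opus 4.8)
The plan is to reduce the lemma to a monotone-likelihood-ratio comparison between two Rayleigh densities. First I would note that for a fixed scalar $\gamma>0$ and $\beta\sim Rayleigh(\sigma)$ the scaled variable $\beta\gamma$ is again Rayleigh distributed, with scale $\gamma\sigma$: indeed $\mathbb{P}(\beta\gamma\le z)=F(z/\gamma)=1-e^{-z^2/(2\gamma^2\sigma^2)}$. Hence for $z>0$
$$\frac{p_{\beta\gamma}(z)}{p_{\beta}(z)}=\frac{(\gamma\sigma)^{-2}z\,e^{-z^2/(2\gamma^2\sigma^2)}}{\sigma^{-2}z\,e^{-z^2/(2\sigma^2)}}=\gamma^{-2}\exp\!\left(\frac{z^2}{2\sigma^2}\bigl(1-\gamma^{-2}\bigr)\right),$$
which is a strictly monotone function of $z$ on $(0,\infty)$: strictly increasing when $\gamma>1$ and strictly decreasing when $\gamma<1$, the direction being governed by the sign of $1-\gamma^{-2}$. (The case $\gamma=1$ is trivial and may be excluded.)

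Next I would take $t$ to be this ratio evaluated at the boundary of the threshold set $A$. When $\gamma>1$ one has $A=\{z\le F^{-1}(\underline{p_A})\}$ with $F^{-1}(\underline{p_A})>0$ (since $\underline{p_A}>0$ and $F$ is strictly increasing with $F(0)=0$), so I would set $t=\gamma^{-2}\exp\!\bigl((F^{-1}(\underline{p_A}))^{2}(1-\gamma^{-2})/(2\sigma^2)\bigr)>0$; monotonicity then gives $p_{\beta\gamma}(z)\le t\,p_\beta(z)$ for $0<z\le F^{-1}(\underline{p_A})$ and the strict inequality $p_{\beta\gamma}(z)>t\,p_\beta(z)$ for $z>F^{-1}(\underline{p_A})$. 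When $\gamma<1$ the argument mirrors this: $A=\{z\ge F^{-1}(1-\underline{p_A})\}$ with $F^{-1}(1-\underline{p_A})>0$, the ratio is decreasing, and choosing $t$ as its value at $F^{-1}(1-\underline{p_A})$ yields $p_{\beta\gamma}\le t\,p_\beta$ on $A\cap(0,\infty)$ and $p_{\beta\gamma}>t\,p_\beta$ on $(0,F^{-1}(1-\underline{p_A}))$.

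Finally I would dispose of the region outside the common support: for $z\le0$ both densities vanish, so the weak inequality on $A$ holds automatically; for the strict inequality this region is irrelevant, since when $\gamma>1$ one has $\mathbb{R}\setminus A\subseteq(0,\infty)$, and when $\gamma<1$ the part of $\mathbb{R}\setminus A$ with $z\le0$ contributes nothing to the integrals in step~(1) of the main proof (there $p_{\beta\gamma}\equiv0$), so it may simply be discarded. The step that needs the most care is not any calculation but the bookkeeping between the two cases: the direction of monotonicity of the likelihood ratio must be matched to whether $A$ is a lower set ($\gamma>1$) or an upper set ($\gamma<1$) --- exactly the role of the sign of $1-\gamma^{-2}$ --- after which both inequalities in the lemma follow from one monotonicity statement by placing $t$ at the boundary of $A$.
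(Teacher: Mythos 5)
Your proposal is correct and follows essentially the same route as the paper: compute the likelihood ratio $p_{\beta\gamma}(z)/p_{\beta}(z)=\gamma^{-2}\exp\bigl(z^2(1-\gamma^{-2})/(2\sigma^2)\bigr)$, observe its monotonicity in $z$, and set $t$ equal to its value at the boundary $F^{-1}(\underline{p_A})$ of $A$. You are in fact a bit more careful than the paper, which only writes out the $\gamma>1$ case and ignores the degenerate region $z\le 0$; your remarks on the $\gamma<1$ case and on why the vanishing of both densities off the support is harmless are correct and welcome, but not a departure from the paper's argument.
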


\begin{proof}
	%In case $\gamma > 1$
	$$\frac{p_{\beta\gamma}(z)}{p_{\beta}(z)} = \gamma^{-2}\cdot exp\left(-\frac{z^2}{2\sigma^2\gamma^2} + \frac{z^2}{2\sigma^2}\right) = \gamma^{-2} \cdot exp\left(\frac{z^2(\gamma^2 - 1)}{2\sigma^2\gamma^2}\right) \leq t$$
	
	$$\Leftrightarrow \frac{z^2(\gamma^2 - 1)}{2\sigma^2\gamma^2} \leq ln(t\gamma^2)$$ $$\Leftrightarrow (\gamma^2 - 1)z^2 \leq 2\sigma^2\gamma^2 ln(t\gamma^2)$$ $$\Leftrightarrow z \leq \frac{\sigma\gamma\sqrt{2(\gamma^2 - 1)ln(t\gamma^2)}}{(\gamma^2 - 1)}.$$ 
	
	What is the lowest $t$ such that $\frac{p_{\beta\gamma}(z)}{p_{\beta}(z)} \leq t$ for all $z\in A$?\\
	Because of $A = \{z \mid z \leq F^{-1}(\underline{p_A})\}$ we know that $z \leq F^{-1}(\underline{p_A})$. Does there exist such a threshold $t$ that both upper bounds coincide? Yes, namely
	
	$$t = \gamma^{-2} exp\left(\frac{(\gamma^2-1)\left(F^{-1}(\underline{p_A})\right)^2}{2\sigma^2\gamma^2}\right).$$
	
\end{proof}

The theorem can be generalized to the case of transformations with multiple multiplicative parameters.

\begin{theorem}
	Let $x\in {\mathbb{R}^m}$, $f:\mathbb{R}^m \rightarrow Y$ be a classifier, ${\psi_{\beta}:\mathbb{R}^m \rightarrow \mathbb{R}^m},\\ {\beta=(\beta_1, ..., \beta_n)^{T}}$ be a multiplicatively composable  transformation for independent random variables $\beta_i\sim Rayleigh(\sigma)$ and ${g(x) = \arg\max\limits_{c}{\mathbb{P}_{\beta}(f\circ\psi_{\beta}(x)=c)}}$
	Denote $${p_A = \mathbb{P}_{\beta}(f\circ \psi_{\beta}(x)=c_A)},$$ $${p_B = \max\limits_{c_B \not= c_A}{ \mathbb{P}_{\beta}(f\circ\psi_{\beta}(x)}=c_B)}.$$ If
	$$p_A \geq \underline{p_A} > \overline{p_B}\geq p_B,$$
	then $g\circ\psi_{\gamma}(x)=c_A$ for all $\gamma\in\Omega$, where $\Omega$ is a region defined by the following inequality
	
	$$\mathbb{P}((\gamma_1^{2}-1)\beta_1^2 +...+ (\gamma_n^{2}-1)\beta_n^2 \leq r) >
	\mathbb{P}((\gamma_1^{2}-1)\beta_1^2 +...+ (\gamma_n^{2}-1)\beta_n^2 \geq \theta),$$
	where $r$ and $\theta$ are the only solutions of the following equations
	$$\mathbb{P}((1-\gamma_1^{-2})\beta_1^2 +...+ (1-\gamma_n^{-2})\beta_n^2 \leq r)=\underline{p_A},$$
	$$\mathbb{P}((1-\gamma_1^{-2})\beta_1^2 +...+ (1-\gamma_n^{-2})\beta_n^2 \geq \theta)=\overline{p_B}.$$
\end{theorem}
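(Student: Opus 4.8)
The plan is to mimic the single-parameter argument of Theorem~\ref{main}, replacing the scalar threshold $F^{-1}(\cdot)$ by a level set of the quadratic statistic that governs the joint likelihood ratio. First I would compute, for i.i.d.\ $\beta_i\sim Rayleigh(\sigma)$ and the scaled vector $\gamma\cdot\beta=(\gamma_1\beta_1,\dots,\gamma_n\beta_n)$, the ratio of joint densities
\[
\frac{p_{\gamma\cdot\beta}(z)}{p_{\beta}(z)}=\Bigl(\prod_{i=1}^{n}\gamma_i^{-2}\Bigr)\exp\!\Bigl(\tfrac{1}{2\sigma^2}\textstyle\sum_{i=1}^{n}(1-\gamma_i^{-2})z_i^2\Bigr),
\]
which is a strictly increasing function of the statistic $S(z):=\sum_i(1-\gamma_i^{-2})z_i^2$. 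Consequently every sublevel set $A=\{z:S(z)\le r\}$ coincides with a set of the form $\{z:p_{\gamma\cdot\beta}(z)\le t\,p_\beta(z)\}$, and dually $B=\{z:S(z)\ge\theta\}$ with a set $\{z:p_{\gamma\cdot\beta}(z)\ge t'\,p_\beta(z)\}$. Choosing $r,\theta$ so that $\mathbb{P}_\beta(S(\beta)\le r)=\underline{p_A}$ and $\mathbb{P}_\beta(S(\beta)\ge\theta)=\overline{p_B}$ reproduces the two defining equations of the statement, and these choices are possible and unique because $S(\beta)$ has a continuous, strictly increasing CDF on its support whenever $\gamma\ne(1,\dots,1)$.

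The second step is the multivariate analogue of Lemma~\ref{tech}: with $t:=(\prod_i\gamma_i^{-2})\exp(r/(2\sigma^2))$ one gets $p_{\gamma\cdot\beta}(z)\le t\,p_\beta(z)$ on $A$ and $p_{\gamma\cdot\beta}(z)> t\,p_\beta(z)$ on $\mathbb{R}^n\setminus A$, directly from monotonicity of the ratio in $S(z)$. Feeding this into the same chain of integral manipulations as in the proof of Theorem~\ref{main} gives
\[
\mathbb{P}_\beta\bigl(f\circ\psi_{\gamma\cdot\beta}(x)=c_A\bigr)-\mathbb{P}_\beta(\gamma\cdot\beta\in A)\ \ge\ t\bigl(p_A-\underline{p_A}\bigr)\ \ge\ 0,
\]
and symmetrically $\mathbb{P}_\beta(f\circ\psi_{\gamma\cdot\beta}(x)=c)\le\mathbb{P}_\beta(\gamma\cdot\beta\in B)$ for every $c\ne c_A$ (here one additionally uses that, by the same Neyman--Pearson reasoning, the superlevel set of $\mathbb{P}_\beta$-mass $\overline{p_B}$ maximizes the $\mathbb{P}_{\gamma\cdot\beta}$-mass over all sets of mass at most $\overline{p_B}$). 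Since $S(\gamma\cdot\beta)=\sum_i(1-\gamma_i^{-2})\gamma_i^2\beta_i^2=\sum_i(\gamma_i^2-1)\beta_i^2$, we have $\mathbb{P}_\beta(\gamma\cdot\beta\in A)=\mathbb{P}\bigl(\sum_i(\gamma_i^2-1)\beta_i^2\le r\bigr)$ and $\mathbb{P}_\beta(\gamma\cdot\beta\in B)=\mathbb{P}\bigl(\sum_i(\gamma_i^2-1)\beta_i^2\ge\theta\bigr)$, which are exactly the two sides of the inequality defining $\Omega$.

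Combining the two bounds, whenever $\gamma\in\Omega$ we obtain $\mathbb{P}_\beta(f\circ\psi_{\gamma\cdot\beta}(x)=c_A)>\mathbb{P}_\beta(f\circ\psi_{\gamma\cdot\beta}(x)=c)$ for all $c\ne c_A$, i.e.\ $g\circ\psi_\gamma(x)=\arg\max_c\mathbb{P}_\beta(f\circ\psi_{\gamma\cdot\beta}(x)=c)=c_A$ by multiplicative composability. Unlike the scalar case there is no need to split on whether each $\gamma_i$ is above or below $1$: the single statistic $S$ absorbs every sign pattern of the coefficients $1-\gamma_i^{-2}$ simultaneously, and $\Omega$ is the natural multidimensional generalization of the interval $(\gamma_1,\gamma_2)$. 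I expect the only genuinely delicate points to be (i) verifying that $S(\beta)$ is atomless with strictly increasing CDF so that $r,\theta$ are well defined and unique — a short argument using that at least one coefficient $1-\gamma_i^{-2}$ is nonzero and the corresponding $\beta_i^2$ has an unbounded, absolutely continuous law — and (ii) the Neyman--Pearson optimality step, namely that among all measurable sets of prescribed $\mathbb{P}_\beta$-mass the $S$-sublevel (resp.\ superlevel) set minimizes (resp.\ maximizes) the $\mathbb{P}_{\gamma\cdot\beta}$-mass; both are routine but are where the real work sits.
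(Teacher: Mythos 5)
Your proposal is correct and follows essentially the same route as the paper's own proof: you compute the same joint likelihood ratio, observe it is monotone in the statistic $S(z)=\sum_i(1-\gamma_i^{-2})z_i^2$, take the same threshold $t=(\prod_i\gamma_i^{-2})\exp(r/(2\sigma^2))$ (the paper's Lemma~\ref{tech_mult}), and run the identical chain of integral inequalities through the sublevel set $A$ and superlevel set $B$. The points you flag as delicate (uniqueness of $r,\theta$ and the Neyman--Pearson step) are exactly the ones the paper handles via the explicit integral decomposition, so nothing is missing.
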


\begin{proof}
	Let us show that $$\mathbb{P}_{\beta}(f\circ \psi_{\beta\gamma}(x)=c_A)\stackrel{(1)}{\geq} \mathbb{P}_{\beta}(\beta\gamma\in A)\stackrel{(2)}{>} \mathbb{P}_{\beta}(\beta\gamma\in B)\stackrel{(3)}{\geq} \mathbb{P}_{\beta}(f\circ \psi_{\beta\gamma}(x)=c_B)$$
	for all $\gamma\in\Omega$, where $$A =\{(z_1,...,z_n)\mid (1-\gamma_1^{-2})z_1^2 +...+ (1-\gamma_n^{-2})z_n^2\leq r\},$$
	$$B =\{(z_1,...,z_n)\mid (1-\gamma_1^{-2})z_1^2 +...+ (1-\gamma_n^{-2})z_n^2\geq \theta\}.$$
	Then the required statement follows directly from these inequalities.
	
	1) Assume $p_{\zeta}(z)$ is the PDF of a random variable $\zeta$. Then $$\mathbb{P}(f\circ\psi_{\beta\gamma}(x)=c_A)-\mathbb{P}(\beta\gamma\in A) = \int\limits_{\mathbb{R}^n}[f\circ\psi_z(x)=c_A]p_{\beta\gamma}(z)dz - \int\limits_{A}p_{\beta\gamma}(z)dz$$
	
	$$=\left(\int\limits_{\mathbb{R}^n\setminus A}[f\circ\psi_z(x)=c_A]p_{\beta\gamma}(z)dz + \int\limits_{A}[f\circ\psi_z(x)=c_A]p_{\beta\gamma}(z)dz\right)$$
	
	$$-\left(\int\limits_A[f\circ\psi_z(x)=c_A]p_{\beta\gamma}(z)dz + \int\limits_A[f\circ\psi_z(x)\not=c_A]p_{\beta\gamma}(z)dz\right)$$
	
	$$= \int\limits_{\mathbb{R}^n\setminus A}[f\circ\psi_z(x)=c_A]p_{\beta\gamma}(z)dz - \int\limits_A[f\circ\psi_z(x)\not=c_A]p_{\beta\gamma}(z)dz$$
	
	$$\stackrel{Lemma\, \ref{tech_mult}}{\geq} t\left(\int\limits_{\mathbb{R}^n\setminus A}[f\circ\psi_z(x)=c_A]p_{\beta}(z)dz - \int\limits_A[f\circ\psi_z(x)\not=c_A]p_{\beta}(z)dz\right)$$
	
	$$=t\left(\int\limits_{\mathbb{R}^n}[f\circ\psi_z(x)=c_A]p_{\beta}(z)dz - \int\limits_Ap_{\beta}(z)dz\right) = t(p_A - \underline{p_A}) \geq 0.$$
	
	2) The inequality $$\mathbb{P}\left((\gamma_1^{2}-1)\beta_1^2+...+(\gamma_n^{2}-1)\beta_n^2 \leq r\right) = \mathbb{P}(\beta\gamma\in A)$$
	$$> \mathbb{P}(\beta\gamma\in B)= \mathbb{P}\left((\gamma_1^{2}-1)\beta_1^2+...+(\gamma_n^{2}-1)\beta_n^2 \geq \theta\right)$$
	holds for all $\gamma\in\Omega$ by the definitions of $A,B$ and $\Omega$.
	
	3) The proof is analogous to the proof for (1).
\end{proof}

\begin{lemma} \label{tech_mult}
	There exists $t>0$ such that $p_{\beta\gamma}(z)\leq t \cdot p_{\beta}(z)$ for all $z\in A$. And further $p_{\beta\gamma}(z) > t\cdot p_{\beta}(z)$ for all $a\in \mathbb{R}^n\setminus A$.
\end{lemma}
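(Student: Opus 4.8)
The plan is to mimic the single-parameter argument of Lemma~\ref{tech} line by line, with the only change being that the scalar exponent is replaced by the quadratic form $S(z):=\sum_{i=1}^n(1-\gamma_i^{-2})z_i^2$ that already appears in the statement of the theorem above.

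First I would write the two densities explicitly. Since the $\beta_i$ are i.i.d.\ $Rayleigh(\sigma)$, the joint density of $\beta=(\beta_1,\dots,\beta_n)$ on the positive orthant is $p_{\beta}(z)=\prod_{i=1}^n\sigma^{-2}z_i e^{-z_i^2/(2\sigma^2)}$. Because multiplying a $Rayleigh(\sigma)$ variable by $\gamma_i>0$ yields a $Rayleigh(\sigma\gamma_i)$ variable, the joint density of $\beta\gamma=(\beta_1\gamma_1,\dots,\beta_n\gamma_n)$ is $p_{\beta\gamma}(z)=\prod_{i=1}^n(\sigma\gamma_i)^{-2}z_i e^{-z_i^2/(2\sigma^2\gamma_i^2)}$. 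Dividing, the $z_i$ prefactors cancel and, using $\frac{\gamma_i^2-1}{\gamma_i^2}=1-\gamma_i^{-2}$, one obtains
$$\frac{p_{\beta\gamma}(z)}{p_{\beta}(z)}=\left(\prod_{i=1}^n\gamma_i^{-2}\right)\exp\left(\frac{1}{2\sigma^2}\sum_{i=1}^n(1-\gamma_i^{-2})z_i^2\right).$$
The key observation is that this ratio depends on $z$ only through $S(z)$ and is a \emph{strictly increasing} function of the scalar $S(z)$ --- and this holds no matter what the signs of the coefficients $1-\gamma_i^{-2}$ are, i.e.\ no matter whether each $\gamma_i$ lies above or below $1$.

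Then I would simply read off the constant. In the proof of the theorem the set $A$ is exactly the sublevel set $A=\{z:S(z)\le r\}$, so choosing
$$t=\left(\prod_{i=1}^n\gamma_i^{-2}\right)\exp\left(\frac{r}{2\sigma^2}\right)>0$$
forces $\{z:p_{\beta\gamma}(z)/p_{\beta}(z)\le t\}$ to coincide with $A$: by strict monotonicity of the ratio in $S(z)$ we get $p_{\beta\gamma}(z)\le t\,p_{\beta}(z)$ exactly on $\{S(z)\le r\}=A$ and the strict inequality $p_{\beta\gamma}(z)>t\,p_{\beta}(z)$ exactly on $\mathbb{R}^n\setminus A$, which is the claim. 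This is precisely the Neyman--Pearson-type likelihood-ratio fact that step~(1) of the theorem's proof invokes.

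There is no genuine obstacle here beyond bookkeeping. The two points that need a moment's care are (a) correctly identifying the law of $\beta\gamma$ as a product of Rayleigh laws with scales $\sigma\gamma_i$ --- once this is in place, the cancellation of the $z_i$ prefactors is exactly what makes the ratio depend on $z$ only through $S(z)$; and (b) noting that the monotonicity argument is insensitive to the sign of $1-\gamma_i^{-2}$, which is what lets the region $\Omega$ in the theorem be two-sided in every coordinate and lets one treat shrinking factors ($\gamma_i<1$) and expanding factors ($\gamma_i>1$) uniformly. Existence and uniqueness of $r$ are not needed for the lemma, since they are supplied by the theorem statement.
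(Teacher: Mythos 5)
Your proposal is correct and follows essentially the same route as the paper's proof: compute the coordinate-wise likelihood ratio, observe it is a monotone function of the quadratic form $\sum_i(1-\gamma_i^{-2})z_i^2$, and choose $t=(\gamma_1\cdots\gamma_n)^{-2}\exp\bigl(r/(2\sigma^2)\bigr)$ so that the sublevel set of the ratio coincides with $A$. Your write-up is if anything slightly more explicit than the paper's (e.g.\ in identifying $\beta_i\gamma_i\sim Rayleigh(\sigma\gamma_i)$ and in noting that monotonicity is insensitive to the signs of the coefficients), but the substance is identical.
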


\begin{proof}
	$$\frac{p_{\beta\gamma}(z)}{p_{\beta}(z)} =
	\frac{p_{\beta_1\gamma_1}(z_1)\cdot...\cdot p_{\beta_n\gamma_n}(z_n)}{p_{\beta_1}(z_1)\cdot ...\cdot p_{\beta_n}(z_n)}$$
	$$=(\gamma_1...\gamma_n)^{-2}\cdot exp\left(-\frac{z_1^2}{2\sigma^2\gamma_1^2} + \frac{z_1^2}{2\sigma^2}\right)\cdot ...\cdot exp\left(-\frac{z_n^2}{2\sigma^2\gamma_n^2} + \frac{z_n^2}{2\sigma^2}\right)$$
	$$= (\gamma_1...\gamma_n)^{-2} \cdot exp\left(\frac{z_1^2(1-\gamma_1^{-2})}{2\sigma^2}+...+\frac{z_n^2(1-\gamma_n^{-2})}{2\sigma^2}\right) \leq t$$
	
	$$\Leftrightarrow z_1^2(1-\gamma_1^{-2})+...+z_n^2(1-\gamma_n^{-2}) \leq 2\sigma^2 ln(t(\gamma_1...\gamma_n)^2).$$
	
	What is the lowest $t$ such that $\frac{p_{\beta\gamma}(z)}{p_{\beta}(z)} \leq t$ for all $z\in A$?\\
	Because of $A =\{(z_1,...,z_n)\mid (1-\gamma_1^{-2})z_1^2 +...+ (1-\gamma_n^{-2})z_n^2\leq r\}$ we know that $(1-\gamma_1^{-2})z_1^2 +...+ (1-\gamma_n^{-2})z_n^2\leq r$. Does there exist such a threshold $t$ that both upper bounds coincide? Yes, namely
	
	$$t = (\gamma_1...\gamma_n)^{-2} exp\left(\frac{r}{2\sigma^2}\right).$$
\end{proof}

We have found the region $\Omega$ in which the classifier is robust. But how to compute this region in practice? Note, that the $\gamma=(1,...,1)$ point does not necessary lie in $\Omega$. But of course we can add it to $\Omega$ as the classifier has the same outputs on it. Then it can be shown that this region is bounded.
Thus the region defining inequality can be solved numerically.

\subsection{Alternative proofs}\label{B}

The above theorems can be proven via a slightly modified functional optimization based framework \cite{zhang}. Let $f^{\#}:\mathbb{R}^m\rightarrow \{0,1\}$ be a binary classifier, $\mathcal{F}$ be a function class that is known to include $f^{\#}$, $\psi_{z}:\mathbb{R}^m\rightarrow\mathbb{R}^m$ be a multiplicative composable transformation with a scalar parameter $z$ and $f_{\pi_0}^{\#}(x_0):=\mathbb{E}_{z\sim\pi_0}[f^{\#}(\psi_z(x_0))]$ be the smoothed classifier over $\psi_z$ with the probability distribution $\pi_0$. Assume $f^{\#}_{\pi_0}(x_0)>\frac{1}{2}$. Then we want to certify that $f^{\#}_{\pi_0}(\psi_{\delta}(x_0))>\frac{1}{2}$ for every $\delta\in \mathcal{B}=[\gamma_1, \gamma_2]$. The constrained adversarial certification framework yields the following lower bound:
$$\min\limits_{\delta\in \mathcal{B}}f^{\#}_{\pi_0}(\psi_{\delta}(x_0)) \geq \min\limits_{f\in \mathcal{F}}\min\limits_{\delta\in \mathcal{B}}\left\{f_{\pi_0}(\psi_{\delta}(x_0))\; s.t.\; f_{\pi_0}(x_0)=f^{\#}_{\pi_0}(x_0)\right\}$$
$$\geq \min\limits_{f\in \mathcal{F}}\min\limits_{\delta\in \mathcal{B}}\max\limits_{\lambda\in\mathbb{R}}\left\{f_{\pi_0}(\psi_{\delta}(x_0)) - \lambda(f_{\pi_0}(x_0)-f^{\#}_{\pi_0}(x_0))\right\}$$
$$\geq \max\limits_{\lambda\in\mathbb{R}}\min\limits_{f\in \mathcal{F}}\min\limits_{\delta\in \mathcal{B}}\left\{f_{\pi_0}(\psi_{\delta}(x_0)) - \lambda(f_{\pi_0}(x_0)-f^{\#}_{\pi_0}(x_0))\right\}$$
$$= \max\limits_{\lambda\in\mathbb{R}}\left\{\lambda f^{\#}_{\pi_0}(x_0)-\max\limits_{\delta\in \mathcal{B}}\mathbb{D}_F (\lambda\pi_0||\pi_{\delta})\right\},$$
where $\pi_{\delta}$ is the distribution of $\psi_{\delta}(z)$ when $z\sim\pi_0$ and we define the discrepancy term
$$\mathbb{D}_{\mathcal{F}} (\lambda\pi_0||\pi_{\delta}):= \max\limits_{f\in \mathcal{F}}\left\{\lambda f_{\pi_0}(x_0) - f_{\pi_{\delta}}(x_0)\right\}.$$

If $\mathcal{F} = \{f: f(x)\in[0,1],\; x\in\mathbb{R}^m\}$, then
$$\mathbb{D}_{\mathcal{F}} (\lambda\pi_0||\pi_{\delta}) = \int(\lambda\pi_0(z) - \pi_{\delta}(z))_+dz,$$
where $(t)_+=\max\{0,t\}$.

With strong duality we have
$$\max\limits_{\lambda\in\mathbb{R}}\left\{\lambda f^{\#}_{\pi_0}(x_0)-\max\limits_{\delta\in \mathcal{B}}\mathbb{D}_{\mathcal{F}} (\lambda\pi_0||\pi_{\delta})\right\} \geq \min\limits_{\delta\in \mathcal{B}}\max\limits_{\lambda \geq 0}\left\{\lambda f^{\#}_{\pi_0}(x_0)-\int(\lambda\pi_0(z) - \pi_{\delta}(z))_+dz\right\}$$

So, all we need to do is to find such an interval $\mathcal{B}=[\gamma_1, \gamma_2]$, that
$$\min\limits_{\delta\in \mathcal{B}}\max\limits_{\lambda\geq 0}\left\{\lambda f^{\#}_{\pi_0}(x_0)-\int(\lambda\pi_0(z) - \pi_{\delta}(z))_+dz\right\} > \frac{1}{2}.$$

In our case $f^{\#}_{\pi_0}(x_0)$ is the probability $p_A$ of top 1 class $A$ which we estimate with $\underline{p_A}$, $\pi_0$ has the Rayleigh distribution and $$\pi_0(z) = \frac{z}{\sigma^2}e^{\frac{-z^2}{2\sigma^2}},\; \pi_{\delta}(z) = \frac{z}{\sigma^2\delta^2}e^{\frac{-z^2}{2\sigma^2\delta^2}}.$$

Let $$C_{\lambda}:=\{z: \lambda\pi_0(z) \geq \pi_{\delta}(z)\}=\{z:\lambda\delta^2 e^{\frac{-z^2}{2\sigma^2}} \geq e^{\frac{-z^2}{2\sigma^2\delta^2}}\}$$
$$= \{z:\frac{-z^2}{2\sigma^2} + \ln{\lambda\delta^2} \geq \frac{-z^2}{2\sigma^2\delta^2}\} = \{z: z^2(1-\delta^2) \geq -2\sigma^2\delta^2\ln{\lambda\delta^2}\}.$$

First, we find the lower bound $\gamma_1$. For this purpose we consider $\mathcal{B}=[\gamma_1, 1]$ which implies $\delta \leq 1$. In this case $$C_{\lambda} = \{z: z^2 \geq \frac{-2\sigma^2\delta^2\ln{\lambda\delta^2}}{1-\delta^2}\}.$$
$$\max\limits_{\lambda\geq 0}\left\{\lambda f^{\#}_{\pi_0}(x_0)-\int(\lambda\pi_0(z) - \pi_{\delta}(z))_+dz\right\} \geq \max\limits_{\lambda\geq 0}\left\{\lambda\underline{p_A} - \int_{C_{\lambda}}(\lambda\pi_0(z) - \pi_{\delta}(z))dz\right\}$$
$$=\max\limits_{\lambda\geq 0}\left\{\lambda\underline{p_A} - \lambda\mathbb{P}\left(\pi_0^2 \geq \frac{-2\sigma^2\delta^2\ln{\lambda\delta^2}}{1-\delta^2}\right) + \mathbb{P}\left(\pi_0^2 \geq \frac{-2\sigma^2\ln{\lambda\delta^2}}{1-\delta^2}\right)\right\}$$
$$=\max\limits_{0 \leq \lambda \leq \delta^{-2}}\left\{\lambda\underline{p_A} - \lambda\mathbb{P}\left(\pi_0^2 \geq \frac{-2\sigma^2\delta^2\ln{\lambda\delta^2}}{1-\delta^2}\right) + \mathbb{P}\left(\pi_0^2 \geq \frac{-2\sigma^2\ln{\lambda\delta^2}}{1-\delta^2}\right)\right\}$$
$$= \max\limits_{0 \leq \lambda \leq \delta^{-2}}\left\{\lambda\underline{p_A} - \lambda\left(1 - F\left(\sqrt{\frac{-2\sigma^2\delta^2\ln{\lambda\delta^2}}{1-\delta^2}}\right)\right) + 1 - F\left(\sqrt{\frac{-2\sigma^2\ln{\lambda\delta^2}}{1-\delta^2}}\right)\right\},$$
where $F(\cdot)$ is the CDF of the Rayleigh distribution with the scale parameter $\sigma$.

Let

$$G(\delta,\lambda) := \lambda\underline{p_A} - \int(\lambda\pi_0(z) - \pi_{\delta}(z))_+dz.$$

For any $\delta\in \mathcal{B}$ $G$ is concave with respect to $\lambda$. So for any $\delta$ $G$ has the only maximum located at $\lambda_{\delta}$ which is the solution of $\frac{\partial G(\delta,\lambda)}{\partial \lambda}=0$. Simple calculations show that

$$\lambda_{\delta} = \frac{1}{\delta^2}\underline{p_A}^{\frac{1-\delta^2}{\delta^2}}.$$

Thus

$$\min\limits_{\delta\in \mathcal{B}}\max\limits_{\lambda \geq 0}G(\delta, \lambda) = \min\limits_{\delta\in \mathcal{B}}G(\delta, \lambda_{\delta})$$
$$= \min\limits_{\delta\in \mathcal{B}}\left\{\lambda_{\delta}\underline{p_A} - \lambda_{\delta}\left(1 - F\left(\sqrt{\frac{-2\sigma^2\delta^2\ln{\lambda_{\delta}\delta^2}}{1-\delta^2}}\right)\right) + 1 - F\left(\sqrt{\frac{-2\sigma^2\ln{\lambda_{\delta}\delta^2}}{1-\delta^2}}\right) \right\}$$
$$=\min\limits_{\delta\in [\gamma_1, 1]}\underline{p_A}^{\delta^{-2}} = \underline{p_A}^{\gamma_1^{-2}} > \frac{1}{2} \Leftrightarrow \gamma_1 > \left(\log{\underline{p_A}}{\frac{1}{2}}\right)^{-\frac{1}{2}} = \frac{F^{-1}(1-\underline{p_A})}{F^{-1}(\frac{1}{2})}$$
which coincides with the result obtained by the previous method when we use a trivial bound on probability of top 2 class $\overline{p_B} = 1 - \underline{p_A}$.

The upper bound $\gamma_2$ can be calculated in the same way.

The result can be generalized to multi-class case as it was shown in \cite{zhang}.

There is a third way to prove the main theorem using TSS framework \cite{li2020provable}. We do not present it here but notice that it results in the same certificates as the two above do.

\subsection{Experiments with a scale parameter}
\label{C}

It is natural to assume that the best smoothed classifiers can be obtained with probability distributions centered at the point of identity transformation. But, unlike conventional distributions, the Rayleigh distribution has different median and mean values. Thus we experiment with the scale parameter $\sigma$ and choose it either to make the median equal to 1 ($\sigma = 1/\sqrt{2\log2}$) or to make the mean equal to 1 ($\sigma = \sqrt{2/\pi}$). Our evaluations for ResNet-110 on CIFAR-10 show that centering at the unit median is beneficial for resulting certificates (Fig.\ref{gamma_rayleigh}). Thus in all our experiments we set the scale parameter to $1/\sqrt{2\log2}$.

\begin{figure}[h]
	\begin{center}
		\includegraphics[width = 0.6\textwidth]{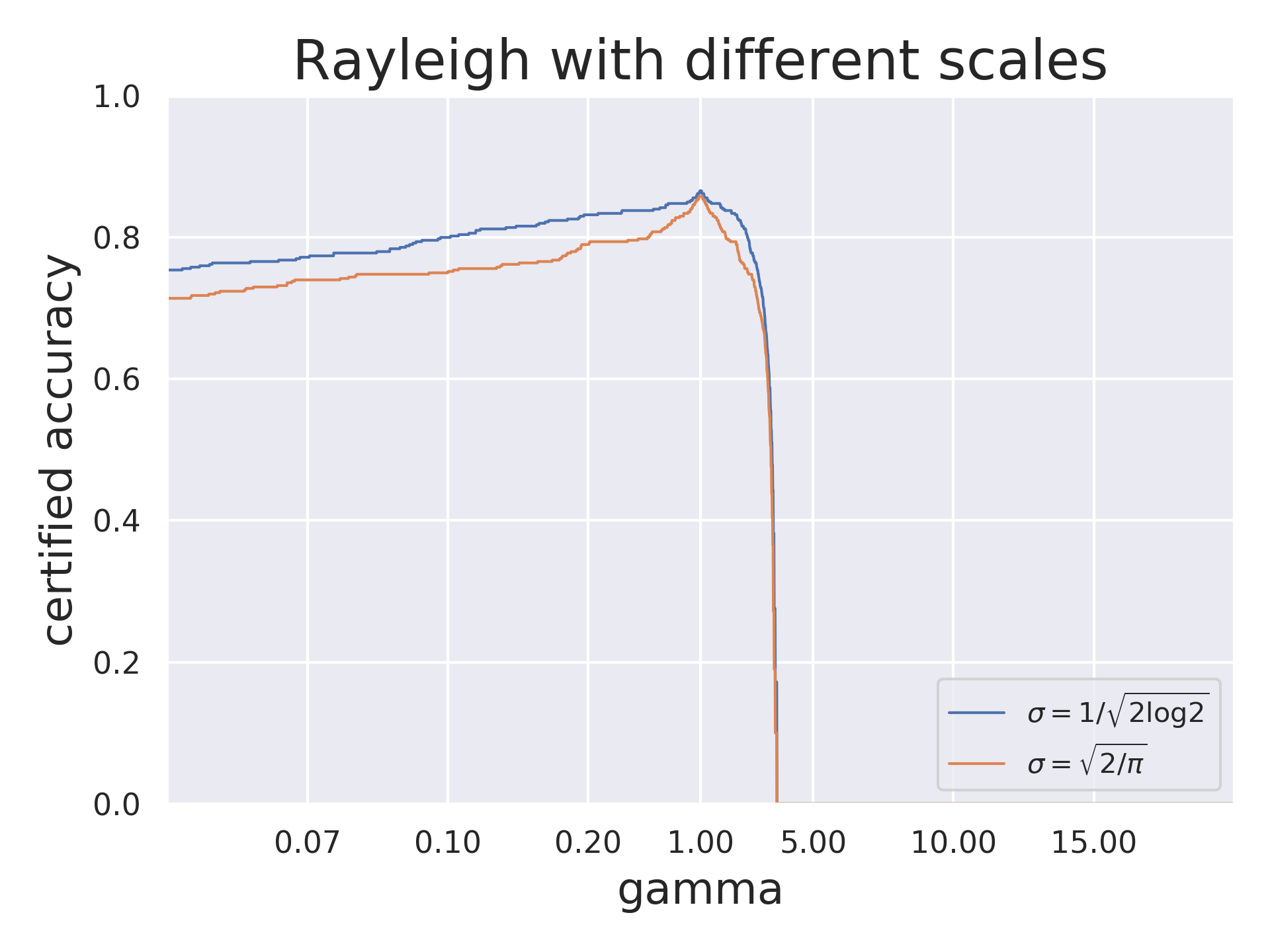}
	\end{center}
	\caption{Certification results for two smoothed ResNet-110s with different scales of Rayleigh distribution on CIFAR-10 dataset}
	\label{gamma_rayleigh}
\end{figure}

\subsection{Certification results}\label{D}

In this section we present obtained certificates against gamma-correction attacks for some attack intervals on CIFAR-10 and ImageNet datasets for smoothed ResNet-20 and ResNet-50 respectively in a table form (Table \ref{table}). The attack intervals $(\gamma_1, \gamma_2)$ are chosen randomly but in such a way that they could be certificates obtained via Rayleigh smoothing.

\begin{table*}[h]
\centering
\begin{tabular}{@{}llrrrrrr@{}}
    \toprule
	Dataset & Setting & $\gamma_1$ & $\gamma_2$ & $n_{\varepsilon}$ & $n_{\gamma}$ & $\alpha$ & Acc\\ \midrule
        CIFAR-10   & Idealized & 0.154 & 2.44 &   - & 1000 & 0.001 & 0.81\\
		 &   Realistic  & 0.710 & 1.33 &50 & 40 & 0.001 & 0.13\\
		 & & 0.710 & 1.33 & 50 & 40 & 0.010 & 0.20\\
		    & & 0.860 & 1.15 & 50 & 20 & 0.001 & 0.18\\
		 &     & 0.860 & 1.15 & 50 & 40 & 0.010 & 0.33\\
		ImageNet& Idealized  & 0.100 & 2.68 & - & 1000 & 0.001 & 0.63\\ \bottomrule
\end{tabular}
\caption{The certification results for the smoothed classifiers. Here ``Acc'' means the portion of images that were correctly classified and whose certificates contain an attack interval $\Gamma = (\gamma_1, \gamma_2)$.}
\label{table}
\end{table*}

\subsection{Asymmetrical distributions for smoothing}\label{E}

The present work demonstrates that randomized smoothing with asymmetrical distributions may be beneficial for certain values of transformation parameters. For example a Rayleigh distributed random variable with unit median has a distinct bias towards small values (Fig. \ref{rayleigh_pdf}). Even though we center the distribution such that the median equals 1, values greater that 1 concentrate in an interval $[1,4]$ while small values often appear near 0. This causes asymmetric certificates favoring smaller values.

It seems logical to expect that smoothing with the inverse Rayleigh distribution ($1/Rayleigh(\sigma)$) will produce an inverse effect when greater values have better certificates. Our experiments show that indeed this is the case (Fig. \ref{cifar10_gamma_inverse_rayleigh}). Yet one can notice that obtained via inverse Rayleigh smoothing certificates for great values are worse than those obtained via Rayleigh smoothing for corresponding small values. The reason behind it is asymmetrical nature of gamma correction. Consider the mean pixel value $x = 0.5$. Then two gamma correction values $\gamma > 1$ and $1/\gamma < 1$ cause different amount of distortion. For example $\gamma = 10$ results in $x^{\gamma} = 0.5^{10} \approx 0.00098$ and $x^{1/\gamma} = 0.5^{0.1} \approx 0.933$. That is for great $\gamma$ the result $x^{\gamma}$ is closer to 0 than $x^{1/\gamma}$ to 1. This leads to the fact that gamma correction with a great factor transfers different images closer to each other in $l_2$ space than a transformation with the corresponding small value does. Obviously closer images are harder to distinguish and a base classifier makes more mistakes causing worse certificates.

This research shows that asymmetrical distributions may be preferable in case of asymmetric perturbations (e.g. gamma-correction, scaling) where transformations with parameters of similar magnitude may cause different levels of distortion.

\begin{figure}[h]
	\begin{center}
		\includegraphics[width = 0.5\textwidth]{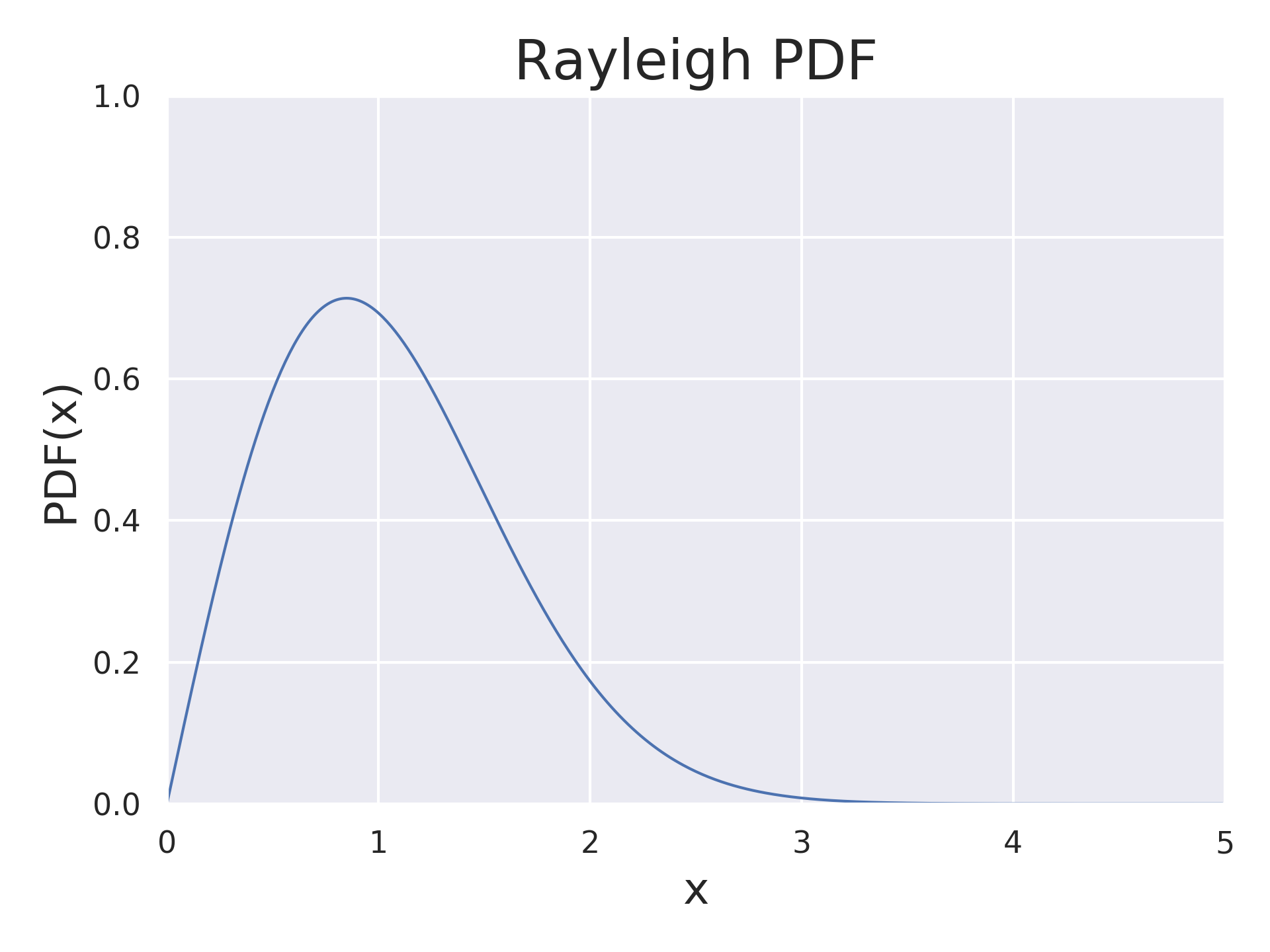}
	\end{center}
	\caption{Probability density function (PDF) for Rayleigh distributed random value with unit median}
	\label{rayleigh_pdf}
\end{figure}

\begin{figure}[t]
	\begin{center}
		\includegraphics[width = 0.6\textwidth]{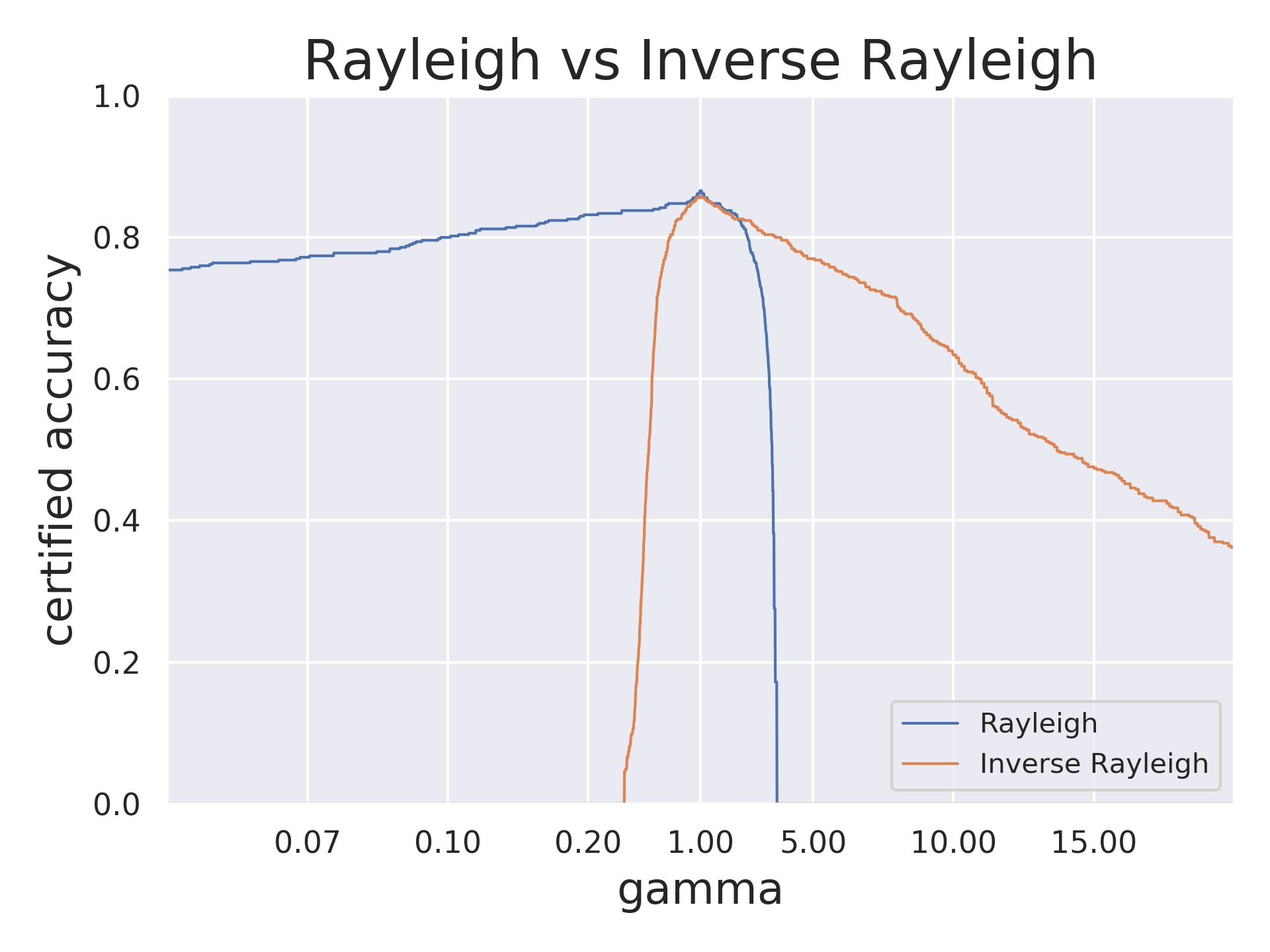}
	\end{center}
	\caption{Certified accuracy for ResNet-110 smoothed with Rayleigh and inverse Rayleigh distributions}
	\label{cifar10_gamma_inverse_rayleigh}
\end{figure}

\end{document}